\documentclass[11pt,a4paper]{article}
\usepackage{times,latexsym}
\usepackage{url}
\usepackage{amsfonts}
\usepackage{amsmath}
\usepackage[T1]{fontenc}
\usepackage{times}
\usepackage{latexsym}
\usepackage{amsthm}

\theoremstyle{plain} 
\newtheorem{lemma}{Lemma}[section]

\theoremstyle{definition} 

\usepackage[table]{xcolor}
\usepackage{booktabs}
\usepackage{multirow}
\definecolor{lightblue}{rgb}{0.85, 0.9, 1} %
\theoremstyle{remark} 

\usepackage[table]{xcolor} 
\usepackage{colortbl}
\usepackage{microtype}
\usepackage{makecell}
\usepackage{hyperref}
\usepackage{graphicx}
\usepackage{multirow}
\usepackage{amsmath}
\usepackage{xcolor}
\usepackage{makecell}
\usepackage{booktabs}
\usepackage{caption}
\usepackage{subcaption}
\usepackage{amsfonts}
\usepackage{tabularx}
\usepackage{rotating}
\usepackage[T1]{fontenc}
\usepackage[utf8]{inputenc}
\usepackage{microtype}
\usepackage{inconsolata}
\usepackage{graphicx}
\usepackage{amssymb}
\usepackage{algorithmicx}
\usepackage{algorithm}
\usepackage{algpseudocode}
\usepackage{amsmath}
\usepackage{stfloats}     %

\usepackage{subcaption}
\usepackage{subcaption} 
\algnewcommand\algorithmicendwhile{\textbf{end\ while}}
\algrenewtext{While}[1]{\algorithmicwhile\ #1\ \algorithmicdo}
\usepackage{caption}
\usepackage{xcolor}
\newcommand{\legendoutline}[2][1.5ex]{%
  \begingroup
  \setlength{\fboxsep}{0pt}%
  \setlength{\fboxrule}{0.8pt}%
  \fcolorbox{#2}{white}{\phantom{\rule{#1}{#1}}}%
  \endgroup
}
\usepackage[table]{xcolor}

\definecolor{lightblue}{RGB}{203, 204, 255}
\definecolor{lightgray}{RGB}{229, 229, 229}
\usepackage[acceptedWithA]{tacl2021v1}
\usepackage[]{tacl2021v1}

\usepackage{xspace,mfirstuc,tabulary}

\newif\iftaclinstructions
\taclinstructionsfalse 
\iftaclinstructions
\renewcommand{\confidential}{}
\renewcommand{\anonsubtext}{(No author info supplied here, for consistency with
TACL-submission anonymization requirements)}
\newcommand{\instr}
\fi

\iftaclpubformat 

\else

\fi

\title{Event Interaction in Low-Rank Bottlenecks for Temporal Relation Extraction}

\author{Wei Sun, Tingyu Qu, Jesse Davis \and Marie-Francine Moens \\
        Department of Computer Science, KU Leuven \\
        Celestijnenlaan 200A 3001 Heverlee, Belgium \\
        \texttt{\{sun.wei, tingyu.qu, jesse.davis, sien.moens\}@kuleuven.be}}

\date{}

\begin{document}
\maketitle
\begin{abstract}

Temporal relation extraction determines whether an event occurs before, after, or simultaneously with another event. Therefore, it relies on accurately modeling how the two events interact. 
Mainstream systems achieve this by concatenating event spans or using shallow fusion, which works well when all model parameters are trainable. 
However, in parameter-efficient fine-tuning, low-rank bottlenecks restrict information flow and prevent these interaction signals from passing through, leading to clear performance drops. 
To address this limitation, we propose a theoretically grounded architecture, Convolutional Bottleneck Interaction (CBI), which first applies lightweight depthwise convolution to enhance event representations and then uses element-wise multiplication to capture effective event-event interactions inside the bottleneck. Across five datasets and seven backbone models in the Adapter and LoRA settings, CBI provides consistent and substantial gains, up to +31.7 micro F1, while adding minimal computational cost, showing that explicit interaction inside low-rank spaces is crucial for temporal relation extraction. The code is available at \url{https://github.com/VRCMF/CIF.git}.
\end{abstract}

\section{Introduction}
\label{sec:intro}

\begin{figure*}[ht]
\centering
\subfloat[\tiny{Full fine-tuning}]{\includegraphics[width=0.65\textwidth]{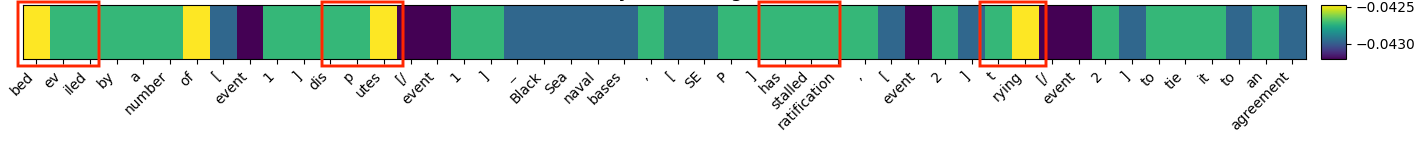}
\label{fig:full_ft_sample_111}
}
\hfil
\subfloat[\tiny{Text concatenation with Adapter fine-tuning.}]{\includegraphics[width=0.65\textwidth]{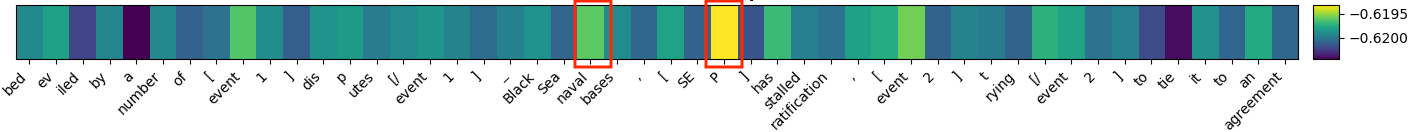}
\label{fig:text_concatenation_adapter}
}
\hfil
\subfloat[\tiny{Text concatenation with LoRA fine-tuning.}]{\includegraphics[width=0.65\textwidth]{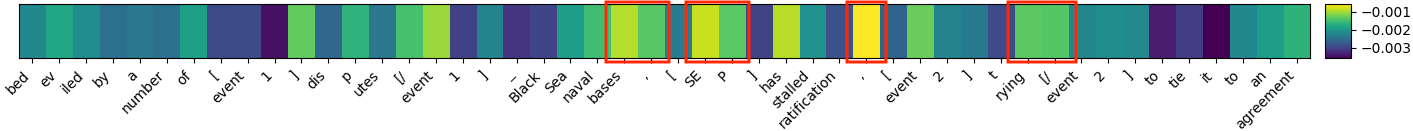}
\label{fig:text_concatenation_lora}
}
\hfil
\subfloat[\tiny{Embedding Fusion with Adapter fine-tuning.}]{\includegraphics[width=0.4\textwidth]{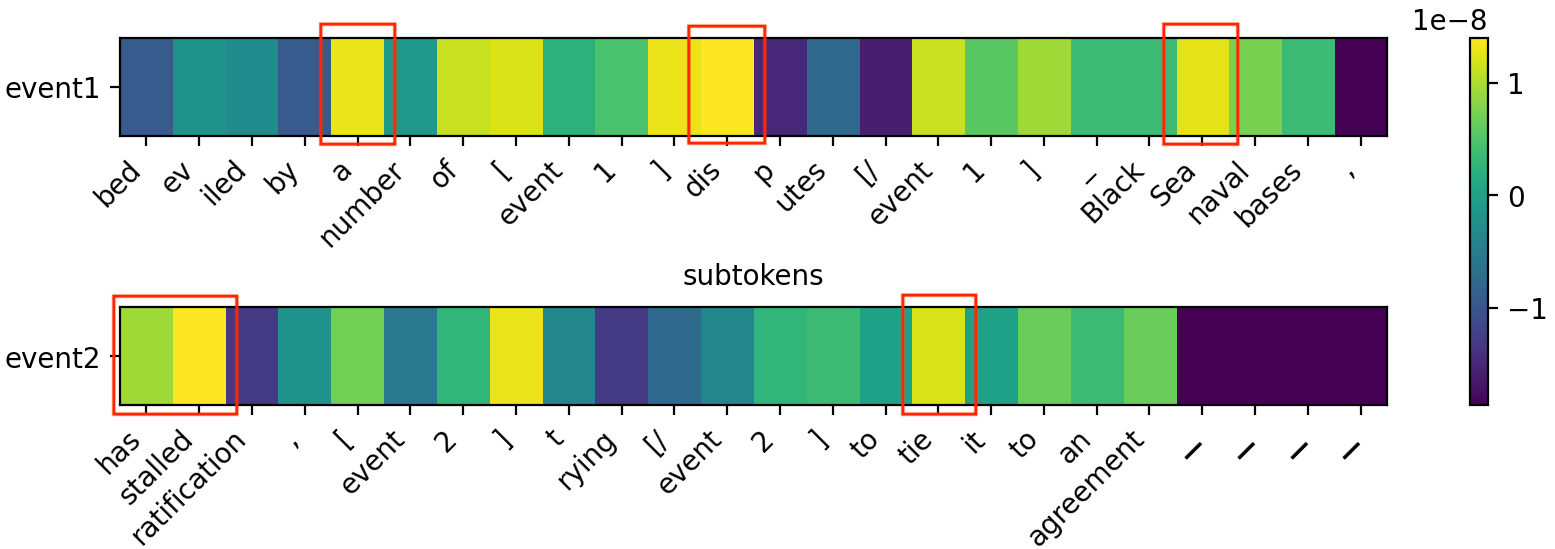}
\label{fig:embedding_adapter}
}
\hfil
\subfloat[\tiny{Embedding Fusion with LoRA fine-tuning.}]{\includegraphics[width=0.4\textwidth]{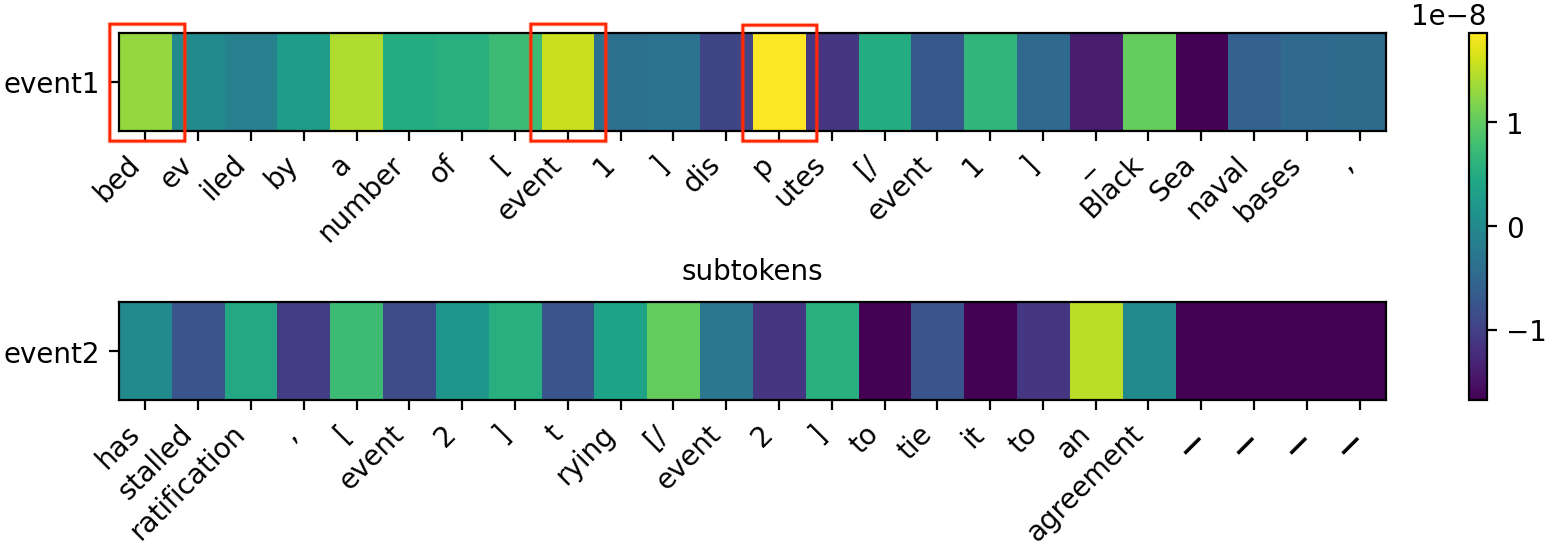}
\label{fig:embedding_lora}
}
\hfil
\caption{Visualization of event interaction maps of the temporal relation extraction task for (a) \textbf{Full Fine-tuning}, \textbf{Text Concatenation} with (b) \textbf{Adapter} fine-tuning, (c) \textbf{LoRA} fine-tuning,  \textbf{Embedding Fusion} with (d) \textbf{Adapter} fine-tuning, and (e) \textbf{LoRA} fine-tuning .
\legendoutline{red} marks tokens emphasized by different methods.
\textcolor{black}{Attention in Text Concatenation and Embedding Fusion is diffused over irrelevant tokens, limiting temporal cue propagation through the bottleneck, while full fine-tuning captures these cues effectively but at a high computational cost.}
}
\label{fig:adapter_original}
\end{figure*}

Temporal relations indicate whether an event occurs before, after, or simultaneously with another event. These relations are important for narrative understanding~\citep{yao-huang-2018-temporal,ballesteros-etal-2020-severing}, story generation~\citep{ye23neural,wang-etal-2025-generating}, and answering temporal questions~\citep{qin-etal-2021-timedial,roccabruna2024will}. \textcolor{black}{Inaccurate temporal predictions can propagate through these systems and lead to incoherent or misleading task outputs.}  The goal of temporal relation extraction is to automatically identify temporal event relations from natural language text~\citep{ning-etal-2018-improving,leeuwenberg2019survey}. Although several architectures have been explored, many temporal relation extraction systems use BERT- and RoBERTa-based encoders~\citep{zhou-etal-2022-rsgt,man2022selecting,huang-etal-2023-classification}.

{A current approach is to fine-tune large language models to perform temporal relational extraction. Because of the computational cost, it would be desirable to employ parameter-efficient fine-tuning~(PEFT) methods such as adapters~\citep{pfeiffer-etal-2021-adapterfusion} or LoRA~\citep{hu2022lora}. 
\textcolor{black}{While PEFT reduces training cost, it also constrains learning by limiting updates to a small set of low-rank parameters. 
As a result, task-specific information must be encoded within a restricted subspace. 
Temporal relation extraction, which depends on fine-grained interactions between events, is particularly sensitive to this constraint. 
In this work, we investigate this limitation and propose methods to mitigate it.}

Unfortunately, PEFT suffers from a key limitation in this context: all task-specific information must pass through low-rank bottlenecks. This requirement is at odds with how most existing approaches model event interaction: they concatenate the two event spans and feed the combined representation into a classifier~\citep{ballesteros-etal-2020-severing,fangcross,wen-ji-2021-utilizing,zhou-etal-2022-rsgt,man2022selecting,huang-etal-2023-classification,roccabruna2024will}, which we refer to as Text Concatenation. They also augment the concatenated representation by performing shallow fusion via an element-wise product at the input-level~\citep{chaturvedi-etal-2025-temporal}, called Embedding Fusion below. These operations work well when the entire model is trainable. 
In contrast, in a PEFT setting only a small set of low-rank parameters is trainable.
Although cross-event interactions are computed by the frozen backbone, the model can adjust them only through low-rank bottlenecks, which limits its ability to learn task-specific interaction patterns.
To show this limitation, consider an example from the TDD-Man dataset~\citep{naik2019tddiscourse}:
Sentence 1: \textit{... bedeviled by a number of [event1]disputes[/event1] - Black Sea naval bases.} Sentence 2: \textit{... has stalled ratification, [event2]trying[/event2] to tie it to an agreement}.
\textcolor{black}{The gold relation is \textit{INCLUDES} because ``bedeviled'' and ``has stalled'' both signal an ongoing background situation, ``disputes'' denotes a temporally extended state, and ``trying'' introduces a more local action unfolding within that state, so the time span of ``disputes'' contains the time of ``trying''.
Figure~\ref{fig:adapter_original} shows that in both Text Concatenation and Embedding Fusion methods, attention is dispersed over irrelevant tokens instead of concentrating on tokens representing the temporal cues. This indicates that crucial evidence fails to propagate effectively through the bottlenecked representation space. In contrast, full fine-tuning can successfully capture these important cues. 
However, it requires markedly more computational resources and longer training time, as shown in Table~\ref{tab:cost_time}.}

{Based on this observation, we propose Convolutional Bottleneck Interaction (CBI) as a lightweight and theoretically grounded architecture that can be integrated into existing PEFT schemes. CBI has two key components.  First, it directly models event–event interactions within the bottleneck by fusing the hidden states of each event. We test three operations: multiplication, addition, and subtraction. 
Multiplication performs the best because it captures channel-wise co-activation and approximates diagonal bilinear interactions. Because each event is encoded independently, contextual cues such as tense or discourse markers are often misaligned, and the element-wise multiplication pairs informative features with uninformative ones, amplifying noise that low-rank modules cannot correct. Consequently, our second component involves performing a lightweight depthwise convolution before the interaction step. 
This convolution enlarges the receptive field of each token, thus capturing nearby temporal cues, smoothing noise, and reducing the sensitivity to small positional shifts of tokens}.

We evaluate our method on five temporal relation extraction datasets: TimeBank-Dense~\citep{cassidy-etal-2014-annotation}, MATRES~\citep{ning-etal-2018-improving}, TCR~\citep{ning-etal-2018-joint}, TDD-Man~\citep{naik2019tddiscourse}, and TDD-Auto~\citep{naik2019tddiscourse}, using seven backbone models: BERT-large~\citep{devlin2019bert}, RoBERTa-large~\citep{liu2019roberta}, Gemma2-2B~\citep{team2024gemma}, Phi3.5-3.8B~\citep{abdin2024phi}, Mistral-7B~\citep{Jiang2023Mistral7}, Qwen2.5-7B~\citep{bai2023qwen}, and Llama 3.1-8B~\citep{touvron2023llama}. 
Under both adapter and LoRA configurations, CBI achieves consistent improvements, reaching gains of up to +31.7 micro-F1 while adding negligible computational and memory cost. 
These results show that explicitly modeling event interaction inside low-rank bottlenecks is both effective and necessary, and that convolution-enhanced multiplicative interaction provides a simple and effective solution for temporal relation extraction in a PEFT setting.
Attention in Text Concatenation and Embedding Fusion is diffused over irrelevant tokens, limiting temporal cue propagation through the bottleneck, while full fine-tuning captures these cues effectively but at a high computational cost.

We make the following contributions: 
\begin{itemize}
\item We show why PEFT underperforms in temporal relation extraction: low-rank bottlenecks miss key cross-event interactions, and common text concatenation or shallow fusion fails to pass temporal cues through a bottlenecked representation space.

\item We propose an event interaction PEFT framework and find multiplication to be the most effective interaction. 
We analyze its weaknesses: token misalignment and noise amplification, and then introduce a lightweight depthwise convolution to stabilize and enhance multiplicative interaction by enlarging the receptive field and smoothing local noise.

\item We propose a theoretically grounded architecture called Convolutional Bottleneck Interaction (CBI), which applies convolution followed by element-wise multiplication inside the bottleneck space. Across five datasets and seven backbone models in adapter and LoRA settings, CBI produces large and consistent improvements, up to +31.7 micro-F1, with minimal computation.
\end{itemize}

\section{Related Work}

\subsection{Temporal Relation Extraction}

Pretrained language models improved temporal relation extraction by providing rich contextual event representations.
The MATRES dataset~\citep{ning-etal-2018-multi} makes this requirement explicit because its densely annotated event pairs can only be predicted correctly when models capture contextual and co-occurrence cues.
Contextual encoders such as ELMo~\citep{peters-etal-2018-deep}, BERT~\citep{devlin2019bert}, and RoBERTa~\citep{liu2019roberta} naturally support joint modeling of two segments.
Thus, the task was widely cast as sentence-pair classification: mark two events, feed their combined context into the encoder, and classify their temporal relation.

This formulation produced a mainstream architecture in which concatenated event contexts are encoded jointly so that self-attention can model event interactions.
Many extensions follow this paradigm~\citep{ballesteros-etal-2020-severing,fangcross,wen-ji-2021-utilizing,zhou-etal-2022-rsgt,man2022selecting,huang-etal-2023-classification}, while including multitask learning, joint prediction of order and relation types, and graph-based modeling for document or cross-document consistency.
\citet{chaturvedi-etal-2025-temporal} further strengthen pairwise fusion signals by introducing an element-wise product of paired event embeddings at the input level.

All these methods assume full-model fine-tuning, where self-attention layers freely adjust parameters to encode event–event interaction.
Under parameter-efficient fine-tuning, most parameters are frozen, and information must pass through low-rank bottlenecks.
These bottlenecks limit the ability of the model to adjust representations for encoding event interaction, making concatenation or other shallow fusion strategies insufficient for transmitting detailed temporal cues.

\subsection{Parameter-Efficient Fine-Tuning}

Large language models (LLMs) have become widely used across NLP because of their strong generalization ability. In temporal relation extraction, 
LLMs have mainly been explored through zero- and few-shot prompting rather than fine-tuning. \citet{chan-etal-2024-exploring} showed that ChatGPT struggles with temporal ordering, and \citet{yuan-etal-2023-zero} proposed prompt decomposition for zero-shot extraction but still 
observed substantial gaps compared to supervised models. 
Recent work has explored parameter-efficient fine-tuning (PEFT) methods such as LoRA~\citep{hu2022lora}. \citet{roccabruna2024will} demonstrated that LoRA yields noticeable gains over instruction-tuned LLMs, suggesting that parameter-efficient updates offer a promising direction.
\textcolor{black}{While prior work such as~\citet{roccabruna2024will} demonstrates the effectiveness of LoRA for temporal relation extraction, it focuses on a single PEFT method in limited experimental settings.}

PEFT methods such as Adapters~\citep{pfeiffer-etal-2021-adapterfusion,lu2023uniadapter} and LoRA have been shown to be effective in many NLP tasks by significantly reducing the number of trainable parameters~\citep{ben-zaken-etal-2022-bitfit,chen2023longlora,wang-etal-2022-adamix}. Yet, their applicability to temporal relation extraction remains unclear, especially because common concatenation-based architectures were originally designed for fully trainable models rather than for settings where most parameters are frozen. To address this gap, we 
fine-tune LLMs using Adapters and LoRA across five temporal relation extraction datasets, providing the first comprehensive evaluation of PEFT methods for this task.

\subsection{Modeling Cross-Event Interactions}

Temporal relation extraction systems differ mainly in how they model interactions between two events. Most adopt implicit interaction: they concatenate the two event spans and rely on a transformer to infer cross-event dependencies through self-attention. This requires a sufficient model capacity. However, in a PEFT setting low-rank adapters constrain the representational space and limit how much interaction-related information can pass through, making implicit modeling unreliable.

To overcome this bottleneck limitation, we draw inspiration from vision–language PEFT models. 
\citet{qu2024introducing} introduced feature routing functions that enhance cross-modal alignment by directing information through low-rank bottlenecks, a strategy later adopted by \citet{guo2025wander} and \citet{borchert-etal-2025-language}. The latter applied this approach to low-resource cross-lingual transfer, consistently outperforming naïve concatenation.

Following this idea, we design three explicit event interaction functions within PEFT bottlenecks: element-wise multiplication, addition, and subtraction of event representations. 
Multiplication performs best because it captures channel-wise co-activation patterns essential for temporal relation extraction. However, it is sensitive to representation misalignment and noise.
To mitigate this, we introduce a depthwise convolution operation, which performs token-level alignment and interaction within the PEFT bottlenecks. 
To our knowledge, our paper proposes the first temporal relation extraction method to implement explicit event interaction in PEFT settings instead of relying on implicit event interactions.

\section{Method}
\label{sec:method}

\textcolor{black}{While LoRA and Adapters efficiently introduce low-rank updates, they primarily operate on each input independently and lack explicit mechanisms to model structured interactions between multiple inputs (e.g., event pairs). To address this limitation, we propose Convolutional Bottleneck Interaction, which augments the PEFT bottleneck with explicit cross-event interaction modeling while preserving parameter efficiency.}

\subsection{Parameter-Efficient Adaptation via LoRA and Adapters}

Before introducing our CBI adapter, we briefly review two common PEFT strategies shown in Figure~\ref{fig:lora_arch} and Figure~\ref{fig:adapter_arch}.

\paragraph{LoRA:}
LoRA adds trainable low-rank matrices to key and value projections while keeping pretrained weights frozen.
As shown in Figure~\ref{fig:lora_arch}, each projection (e.g., $\mathbf{W}_Q$, $\mathbf{W}_V$) receives a rank-$r$ update:
\begin{equation}
    \Delta \mathbf{W} = \mathbf{W}_{\text{down}} \mathbf{W}_{\text{up}},
\end{equation}
where $\mathbf{W}_{\text{down}}\in\mathbb{R}^{d\times r}$ is the down-projection and
$\mathbf{W}_{\text{up}}\in\mathbb{R}^{r\times d}$ is the up-projection, with $r\ll d$. 
$d$ is the dimension of hidden representations. 
Only these two matrices are trained.

\paragraph{Adapters:}
Adapters insert a small bottleneck module inside each transformer block (Figure~\ref{fig:adapter_arch}).
Given hidden state $\mathbf{H}$:
\begin{equation}
\operatorname{Adapter}(\mathbf{H}) =
\mathbf{H} + \mathbf{W}_{\text{up}} 
\mathbin \sigma(\mathbf{W}_{\text{down}} \mathbf{H}),
\end{equation}
where $\sigma(\cdot)$ is the activation function, and  
$\mathbf{W}_{\text{down}}$, $\mathbf{W}_{\text{up}}$ form the learnable low-rank bottleneck.

\subsection{Overview of CBI}

\begin{figure}[ht]
\centering
\begin{subfigure}[b]{0.5\textwidth}
  \centering
  \includegraphics[width=\linewidth]{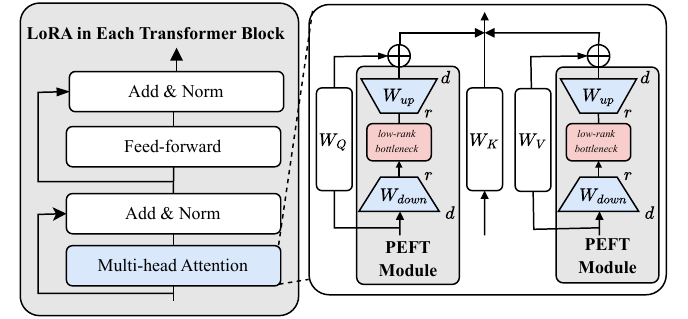}
  \caption{\tiny LoRA in a transformer block.}
  \label{fig:lora_arch}
\end{subfigure}
\hfill
\begin{subfigure}[b]{0.32\textwidth}
  \centering
  \includegraphics[width=0.95\linewidth]{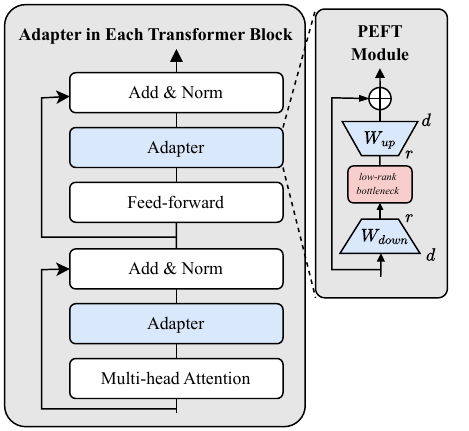}
  \caption{\tiny Adapter in a transformer block.}
  \label{fig:adapter_arch}
\end{subfigure}
\hfill
\begin{subfigure}[b]{0.15\textwidth} 
  \centering
  \includegraphics[width=\linewidth]{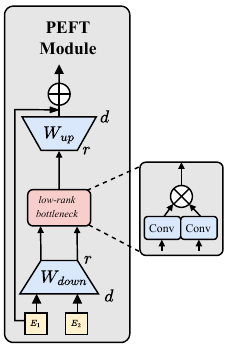} 
  \caption{\tiny Convolutional Bottleneck Interaction.}
  \label{fig:dci_arch}
\end{subfigure}
\caption{Illustration of (a) LoRA, 
(b) Adapter, and 
(c) The Convolutional Bottleneck Interaction adapter, which respectively replaces the LoRA or adapter block when integrated in the transformer block of (a) and (b). 
The parameters in the blue boxes are learned during the fine-tuning.}
\label{fig:overview_arch}
\end{figure}

The \textbf{Convolutional Bottleneck Interaction (CBI) Adapter} (Figure~\ref{fig:dci_arch})
is inserted inside each low-rank bottleneck.
Given two events, CBI enhances bottleneck features and explicitly models cross-event exchange through:
(a) Dual-Tower Local Enrichment: down-project each event into a rank-$r$ bottleneck and apply depthwise convolution to aggregate local temporal context;
(b) Explicit Cross-Event Interaction: interact the enriched bottleneck features via element-wise multiplication to capture cross-event interaction.

A complete theoretical analysis of CBI is provided in Appendix~\ref{appendix:theory}.

\subsection{Notation and Input Representations}

Let $\mathbf{H}_1,\mathbf{H}_2\in\mathbb{R}^{L\times d}$ be contextualized embeddings
of two textual segments containing events $e_1$ and $e_2$, produced by a frozen backbone $f_\theta$.
Both sequences are padded to length $L$.
CBI operates only on these two events within the PEFT bottleneck.

\subsection{Dual-Tower Local Enrichment}

We first map each event to the bottleneck space:
\begin{multline}
\mathbf{E}_1=\mathbf{H}_1\mathbf{W}_{\text{down}},\quad 
\mathbf{E}_2=\mathbf{H}_2\mathbf{W}_{\text{down}}
\end{multline}
where $\mathbf{E}_i\in\mathbb{R}^{L\times r}$ contains $r$ latent channels~(\textcolor{black}{a channel is a dimension in the bottleneck representation}) per token. $\mathbf{W}_{\text{down}}\in\mathbb{R}^{d\times r},\ r\ll d$.

In a rank-$r$ bottleneck, each channel is narrow and may miss local context.
We then apply a depthwise convolution independently to each channel:
\begin{equation}
\tilde{\mathbf{E}}_i=\mathrm{DWConv}_i(\mathbf{E}_i),\quad i\in\{1,2\},
\end{equation}
where $\mathrm{DWConv}_i$ 
uses a 1D convolution kernel $\boldsymbol{\theta}=[\theta_{-k/2},\ldots,\theta_{k/2}]\in\mathbb{R}^{k}$ 
along the token axis.
This operation can be equivalently expressed as a matrix multiplication,
where $\mathbf{T}\in\mathbb{R}^{L\times L}$ is the Toeplitz matrix induced by the kernel $\boldsymbol{\theta}$.
For each event:
\begin{multline}\label{eq:conv_def}
    \tilde{\mathbf{E}}_i = \mathbf{T}\mathbf{E}_i \in\mathbb{R}^{L\times r},
    \quad\text{equivalently}\quad \\
    \tilde{\mathbf{e}}_i(l)=\sum_{j} \theta_j\,\mathbf{e}_i(l+j).
\end{multline}
Depthwise convolution expands the local receptive field and smooths noise,
yielding enriched bottleneck features $\tilde{\mathbf{E}}_i\in\mathbb{R}^{L\times r}$.

\subsection{Explicit Cross-Event Interaction}

After enrichment, we form cross-event interaction features by element-wise multiplication:
\begin{equation}
\mathbf{F} = \tilde{\mathbf{E}}_1 \odot \tilde{\mathbf{E}}_2,
\qquad
\mathbf{F}\in\mathbb{R}^{L\times r}.
\label{interaction_method}
\end{equation}
For each token position $l$ and channel $c$,
\begin{equation}
\mathbf{F}[l,c]=\tilde{\mathbf{E}}_1[l,c]\tilde{\mathbf{E}}_2[l,c].
\end{equation}
The product $\mathbf{F}[l,c]$ is a second-order event interaction.
Element-wise products over tokens in a fixed channel recover the diagonal bilinear term (\textcolor{black}{i.e., the result of a bilinear interaction that only involves element-wise interactions between corresponding dimensions}):
\begin{equation}
\sum_{l=1}^L \mathbf{F}[l,c]
=
\left(\tilde{\mathbf{E}}_1^\top\tilde{\mathbf{E}}_2\right)_{c,c}.
\end{equation}
Therefore, $\mathbf{F}$ is an explicit token-wise representation of diagonal bilinear
cross-event interactions, which are hard to learn implicitly under a rank-$r$ PEFT update.
Depthwise convolution with multiplication further allows near-aligned tokens
to interact even if their exact positions differ.

\textcolor{black}{This benefit of the convolution is particularly important because element-wise multiplication implicitly relies on token-level alignment: even noise coming from small positional deviations can perturb the product and lead to unstable second-order interactions (theoretical proof is in Appendix~\ref{sec:multi_error}). By introducing a depthwise convolution prior to interaction, CBI mitigates this misalignment through local aggregation, which acts as a soft alignment and denoising operator. 
This preserves the desired diagonal bilinear signal while suppressing noise, resulting in more robust cross-event interaction features; see Appendix~\ref{sec:cbi_solve_error} for the theoretical proof.
}

\subsection{Residual Injection}

We convert the interaction field back to hidden size $d$
using the PEFT up-projection and inject it through a residual path:
\begin{equation}
\Delta\mathbf{H} = \mathbf{F}\mathbf{W}_{\text{up}},
\qquad
\mathbf{H}'_1 = \mathbf{H}_1 + \Delta\mathbf{H},
\end{equation}
where $\mathbf{W}_{\text{up}}\in\mathbb{R}^{r\times d}$ is the PEFT up-projection.
$\mathbf{H}'_1$ is then passed to the next transformer blocks.

\begin{table*}[]
\centering
\tiny
\renewcommand{\arraystretch}{1}
\setlength{\tabcolsep}{8pt}
\begin{tabular}{l l| l| c c c c c| c}
\toprule
\textbf{Backbone} & \textbf{PEFT} & \textbf{Methods} & \textbf{MATRES} & \textbf{TB-Dense} & \textbf{TCR} & \textbf{TDD-Auto} & \textbf{TDD-Man} & \textbf{AVG} \\
\midrule
& \multirow{1}{*}{/} 
& \textcolor{black}{FFT~(upper bound)} & 82.15\textsubscript{$\pm$0.34} & 63.27\textsubscript{$\pm$0.41} & 81.12\textsubscript{$\pm$0.38} & 86.71\textsubscript{$\pm$0.33} & 80.82\textsubscript{$\pm$0.45} & 78.81\textsubscript{$\pm$0.36} \\ 
\cmidrule(lr){2-9}
\multirow{6}{*}{\textbf{RoBERTa-large}} 
& \multirow{3}{*}{Adapter} 
& Text Concatenation & \cellcolor{lightgray}{\underline{58.31}}\textsubscript{$\pm$0.91} & \cellcolor{lightgray}{\underline{41.69}}\textsubscript{$\pm$1.07} & \cellcolor{lightgray}{\underline{59.72}}\textsubscript{$\pm$0.66} & \cellcolor{lightgray}{\underline{32.85}}\textsubscript{$\pm$1.12} & {{31.59}}\textsubscript{$\pm$0.53} & {{44.83}}\textsubscript{$\pm$0.78} \\
& & Embedding Fusion & 58.31\textsubscript{$\pm$0.74} & 41.69\textsubscript{$\pm$0.88} & 59.72\textsubscript{$\pm$1.13} & 32.85\textsubscript{$\pm$0.97} & \cellcolor{lightgray}{\underline{57.14}}\textsubscript{$\pm$0.49} & \cellcolor{lightgray}{\underline{49.94}}\textsubscript{$\pm$1.05} \\
& & \textbf{CBI} & \cellcolor{lightblue}{\textbf{74.35}}\textsubscript{$\pm$0.85} & \cellcolor{lightblue}{\textbf{68.18}}\textsubscript{$\pm$1.07} & \cellcolor{lightblue}{\textbf{73.92}}\textsubscript{$\pm$1.61} & \cellcolor{lightblue}{\textbf{86.85}}\textsubscript{$\pm$1.32} & \cellcolor{lightblue}{\textbf{79.67}}\textsubscript{$\pm$1.18} & \cellcolor{lightblue}{\textbf{76.59}}\textsubscript{$\pm$1.22} \\
\cmidrule(lr){2-9}
& \multirow{3}{*}{LoRA} 
& Text Concatenation & \cellcolor{lightgray}{\underline{67.58}}\textsubscript{$\pm$0.62} & \cellcolor{lightgray}{\underline{55.17}}\textsubscript{$\pm$0.97} & \cellcolor{lightgray}{\underline{67.44}}\textsubscript{$\pm$0.74} & \cellcolor{lightgray}{\underline{71.85}}\textsubscript{$\pm$0.91} & 64.57\textsubscript{$\pm$0.83} & \cellcolor{lightgray}{\underline{65.32}}\textsubscript{$\pm$0.58} \\
& & Embedding Fusion & 59.52\textsubscript{$\pm$1.01} & 46.08\textsubscript{$\pm$0.79} & 62.35\textsubscript{$\pm$1.18} & 64.95\textsubscript{$\pm$0.55} & \cellcolor{lightgray}{\underline{68.24}}\textsubscript{$\pm$0.47} & 60.23\textsubscript{$\pm$0.88} \\
& & \textbf{CBI} & \cellcolor{lightblue}{\textbf{70.16}}\textsubscript{$\pm$1.38} & \cellcolor{lightblue}{\textbf{62.70}}\textsubscript{$\pm$0.74} & \cellcolor{lightblue}{\textbf{67.75}}\textsubscript{$\pm$1.20} & \cellcolor{lightblue}{\textbf{89.35}}\textsubscript{$\pm$0.83} & \cellcolor{lightblue}{\textbf{76.24}}\textsubscript{$\pm$1.27} & \cellcolor{lightblue}{\textbf{73.24}}\textsubscript{$\pm$1.00} \\
\midrule
& \multirow{1}{*}{/} 
& \textcolor{black}{FFT~(upper bound)} & 81.76\textsubscript{$\pm$0.32} & 60.78\textsubscript{$\pm$0.41} & 79.35\textsubscript{$\pm$0.36} & 81.77\textsubscript{$\pm$0.33} & 80.53\textsubscript{$\pm$0.39} & 76.84\textsubscript{$\pm$0.35} \\
\cmidrule(lr){2-9}
\multirow{6}{*}{\textbf{BERT-large}} 
& \multirow{3}{*}{Adapter} 
& Text Concatenation & \cellcolor{lightgray}{\underline{58.31}}\textsubscript{$\pm$0.72} & \cellcolor{lightgray}{\underline{41.69}}\textsubscript{$\pm$1.31} & \cellcolor{lightgray}{\underline{59.72}}\textsubscript{$\pm$0.91} & \cellcolor{lightgray}{\underline{32.85}}\textsubscript{$\pm$0.59} & 31.59\textsubscript{$\pm$0.94} & 44.83\textsubscript{$\pm$0.86} \\
& & Embedding Fusion & 58.31\textsubscript{$\pm$0.88} & 41.69\textsubscript{$\pm$0.97} & 59.72\textsubscript{$\pm$1.13} & 32.85\textsubscript{$\pm$0.52} & \cellcolor{lightgray}{\underline{58.37}}\textsubscript{$\pm$0.43} & \cellcolor{lightgray}{\underline{50.19}}\textsubscript{$\pm$1.02} \\
& & \textbf{CBI} & \cellcolor{lightblue}{\textbf{71.69}}\textsubscript{$\pm$0.97} & \cellcolor{lightblue}{\textbf{67.55}}\textsubscript{$\pm$1.08} & \cellcolor{lightblue}{\textbf{72.53}}\textsubscript{$\pm$1.51} & \cellcolor{lightblue}{\textbf{90.60}}\textsubscript{$\pm$1.44} & \cellcolor{lightblue}{\textbf{79.02}}\textsubscript{$\pm$1.19} & \cellcolor{lightblue}{\textbf{76.28}}\textsubscript{$\pm$1.21} \\
\cmidrule(lr){2-9}
& \multirow{3}{*}{LoRA} 
& Text Concatenation & \cellcolor{lightgray}{\underline{66.69}}\textsubscript{$\pm$0.59} & \cellcolor{lightgray}{\underline{54.86}}\textsubscript{$\pm$0.98} & \cellcolor{lightgray}{\underline{65.90}}\textsubscript{$\pm$1.04} & \cellcolor{lightgray}{\underline{71.65}}\textsubscript{$\pm$1.22} & \cellcolor{lightgray}{\underline{65.55}}\textsubscript{$\pm$0.47} & \cellcolor{lightgray}{\underline{64.93}}\textsubscript{$\pm$0.66} \\
& & Embedding Fusion & 58.31\textsubscript{$\pm$1.12} & 45.30\textsubscript{$\pm$0.76} & 61.11\textsubscript{$\pm$0.83} & 63.75\textsubscript{$\pm$1.09} & 60.98\textsubscript{$\pm$0.61} & 57.89\textsubscript{$\pm$0.95} \\
& & \textbf{CBI} & \cellcolor{lightblue}{\textbf{71.94}}\textsubscript{$\pm$1.05} & \cellcolor{lightblue}{\textbf{65.36}}\textsubscript{$\pm$0.89} & \cellcolor{lightblue}{\textbf{68.21}}\textsubscript{$\pm$0.62} & \cellcolor{lightblue}{\textbf{88.25}}\textsubscript{$\pm$1.19} & \cellcolor{lightblue}{\textbf{76.82}}\textsubscript{$\pm$0.53} & \cellcolor{lightblue}{\textbf{74.12}}\textsubscript{$\pm$0.76} \\
\midrule
& \multirow{1}{*}{/} 
& \textcolor{black}{FFT~(upper bound)} 
& 78.35\textsubscript{$\pm$0.39} 
& 73.36\textsubscript{$\pm$0.61} 
& 73.63\textsubscript{$\pm$0.67} 
& 89.74\textsubscript{$\pm$0.45} 
& 76.23\textsubscript{$\pm$0.57} 
& 78.26\textsubscript{$\pm$0.53} \\
& \multirow{1}{*}{/}  
& \textcolor{black}{ZST~(lower bound)} 
& 53.56\textsubscript{$\pm$1.88} 
& 25.78\textsubscript{$\pm$2.32} 
& 53.62\textsubscript{$\pm$2.01} 
& 24.79\textsubscript{$\pm$2.41} 
& 26.79\textsubscript{$\pm$2.14} 
& 36.91\textsubscript{$\pm$1.93} \\
\cmidrule(lr){2-9}
\multirow{2}{*}{\textbf{Gemma2-2B}} 
& \multirow{3}{*}{Adapter} 
& Text Concatenation & \cellcolor{lightgray}{\underline{59.19}}\textsubscript{$\pm$0.63} & \cellcolor{lightgray}{\underline{44.51}}\textsubscript{$\pm$1.32} & \cellcolor{lightgray}{\underline{59.10}}\textsubscript{$\pm$1.04} & \cellcolor{lightgray}{\underline{52.05}}\textsubscript{$\pm$0.77} & \cellcolor{lightgray}{\underline{59.27}}\textsubscript{$\pm$0.81} & \cellcolor{lightgray}{\underline{54.82}}\textsubscript{$\pm$0.92} \\
& & Embedding Fusion & 53.55\textsubscript{$\pm$1.18} & 39.18\textsubscript{$\pm$0.74} & 55.71\textsubscript{$\pm$0.86} & 31.65\textsubscript{$\pm$0.49} & 25.80\textsubscript{$\pm$1.29} & 41.18\textsubscript{$\pm$1.03} \\
& & \textbf{CBI} & \cellcolor{lightblue}{\textbf{62.58}}\textsubscript{$\pm$1.02} & \cellcolor{lightblue}{\textbf{59.09}}\textsubscript{$\pm$1.14} & \cellcolor{lightblue}{\textbf{64.20}}\textsubscript{$\pm$1.48} & \cellcolor{lightblue}{\textbf{77.20}}\textsubscript{$\pm$1.55} & \cellcolor{lightblue}{\textbf{68.24}}\textsubscript{$\pm$1.27} & \cellcolor{lightblue}{\textbf{66.26}}\textsubscript{$\pm$1.29} \\
\cmidrule(lr){2-9}
& \multirow{3}{*}{LoRA} 
& Text Concatenation & \cellcolor{lightgray}{\underline{60.40}}\textsubscript{$\pm$0.74} & \cellcolor{lightgray}{\underline{49.69}}\textsubscript{$\pm$0.85} & \cellcolor{lightgray}{\underline{61.42}}\textsubscript{$\pm$0.57} & \cellcolor{lightgray}{\underline{63.70}}\textsubscript{$\pm$1.23} & \cellcolor{lightgray}{\underline{59.76}}\textsubscript{$\pm$0.94} & \cellcolor{lightgray}{\underline{58.99}}\textsubscript{$\pm$0.69} \\
& & Embedding Fusion & 52.90\textsubscript{$\pm$1.09} & 38.56\textsubscript{$\pm$0.52} & 57.87\textsubscript{$\pm$0.98} & 26.95\textsubscript{$\pm$0.41} & 24.90\textsubscript{$\pm$1.21} & 40.24\textsubscript{$\pm$0.82} \\
& & \textbf{CBI} & \cellcolor{lightblue}{\textbf{66.53}}\textsubscript{$\pm$0.78} & \cellcolor{lightblue}{\textbf{59.40}}\textsubscript{$\pm$0.58} & \cellcolor{lightblue}{\textbf{64.51}}\textsubscript{$\pm$0.84} & \cellcolor{lightblue}{\textbf{83.35}}\textsubscript{$\pm$0.81} & \cellcolor{lightblue}{\textbf{69.47}}\textsubscript{$\pm$0.45} & \cellcolor{lightblue}{\textbf{68.65}}\textsubscript{$\pm$0.40} \\
\midrule
& \multirow{1}{*}{/} 
& \textcolor{black}{FFT~(upper bound)} & 81.53\textsubscript{$\pm$0.48} & 71.84\textsubscript{$\pm$0.62} & 75.83\textsubscript{$\pm$0.57} & 89.09\textsubscript{$\pm$0.39} & 75.83\textsubscript{$\pm$0.51} & 78.82\textsubscript{$\pm$0.46} \\ 
& \multirow{1}{*}{/}  & \textcolor{black}{ZST~(lower bound)} & 47.88\textsubscript{$\pm$1.73} & 23.18\textsubscript{$\pm$2.11} & 45.87\textsubscript{$\pm$1.84} & 25.36\textsubscript{$\pm$2.25} & 26.95\textsubscript{$\pm$1.96} & 33.85\textsubscript{$\pm$1.67} \\ 
\cmidrule(lr){2-9}
\multirow{2}{*}{\textbf{Phi-3.5-3.8B}} 
& \multirow{3}{*}{Adapter} 
& Text Concatenation & \cellcolor{lightgray}{\underline{59.35}}\textsubscript{$\pm$0.95} & \cellcolor{lightgray}{\underline{48.43}}\textsubscript{$\pm$1.27} & \cellcolor{lightgray}{\underline{60.96}}\textsubscript{$\pm$0.83} & \cellcolor{lightgray}{\underline{58.85}}\textsubscript{$\pm$0.71} & \cellcolor{lightgray}{\underline{60.41}}\textsubscript{$\pm$0.62} & \cellcolor{lightgray}{\underline{57.60}}\textsubscript{$\pm$0.86} \\
& & Embedding Fusion & 56.29\textsubscript{$\pm$0.77} & 40.13\textsubscript{$\pm$0.88} & 55.86\textsubscript{$\pm$1.09} & 31.20\textsubscript{$\pm$0.49} & 26.86\textsubscript{$\pm$1.21} & 42.07\textsubscript{$\pm$1.02} \\
& & \textbf{CBI} & \cellcolor{lightblue}{\textbf{61.21}}\textsubscript{$\pm$0.92} & \cellcolor{lightblue}{\textbf{59.09}}\textsubscript{$\pm$1.17} & \cellcolor{lightblue}{\textbf{63.89}}\textsubscript{$\pm$1.42} & \cellcolor{lightblue}{\textbf{71.60}}\textsubscript{$\pm$1.34} & \cellcolor{lightblue}{\textbf{67.59}}\textsubscript{$\pm$1.26} & \cellcolor{lightblue}{\textbf{64.68}}\textsubscript{$\pm$1.21} \\
\cmidrule(lr){2-9}
& \multirow{3}{*}{LoRA} 
& Text Concatenation & \cellcolor{lightgray}{\underline{61.13}}\textsubscript{$\pm$0.78} & \cellcolor{lightgray}{\underline{54.70}}\textsubscript{$\pm$0.94} & \cellcolor{lightgray}{\underline{61.73}}\textsubscript{$\pm$1.11} & \cellcolor{lightgray}{\underline{62.20}}\textsubscript{$\pm$0.65} & \cellcolor{lightgray}{\underline{63.43}}\textsubscript{$\pm$0.57} & \cellcolor{lightgray}{\underline{60.64}}\textsubscript{$\pm$0.76} \\
& & Embedding Fusion & 50.81\textsubscript{$\pm$0.93} & 27.90\textsubscript{$\pm$1.19} & 48.61\textsubscript{$\pm$0.72} & 27.40\textsubscript{$\pm$0.41} & 26.12\textsubscript{$\pm$1.24} & 36.17\textsubscript{$\pm$0.88} \\
& & \textbf{CBI} & \cellcolor{lightblue}{\textbf{70.00}}\textsubscript{$\pm$1.14} & \cellcolor{lightblue}{\textbf{59.25}}\textsubscript{$\pm$0.91} & \cellcolor{lightblue}{\textbf{70.22}}\textsubscript{$\pm$1.11} & \cellcolor{lightblue}{\textbf{84.25}}\textsubscript{$\pm$1.06} & \cellcolor{lightblue}{\textbf{72.90}}\textsubscript{$\pm$0.85} & \cellcolor{lightblue}{\textbf{71.32}}\textsubscript{$\pm$0.60} \\
\midrule
& \multirow{1}{*}{/} 
& \textcolor{black}{FFT~(upper bound)} 
& 72.89\textsubscript{$\pm$0.68} 
& 62.43\textsubscript{$\pm$0.79} 
& 72.97\textsubscript{$\pm$0.64} 
& 90.87\textsubscript{$\pm$0.88} 
& 65.89\textsubscript{$\pm$0.71} 
& 73.01\textsubscript{$\pm$0.83} 
\\
& \multirow{1}{*}{/}  
& \textcolor{black}{ZST~(lower bound)} 
& 40.02\textsubscript{$\pm$2.31} 
& 32.54\textsubscript{$\pm$2.47} 
& 39.27\textsubscript{$\pm$2.12} 
& 30.69\textsubscript{$\pm$2.76} 
& 19.73\textsubscript{$\pm$2.58} 
& 32.45\textsubscript{$\pm$2.34} 
\\ 

\cmidrule(lr){2-9}
\multirow{2}{*}{\textbf{Mistral-7B}} 
& \multirow{3}{*}{Adapter} 
& Text Concatenation & \cellcolor{lightgray}{\underline{58.31}}\textsubscript{$\pm$1.04} & \cellcolor{lightgray}{\underline{41.69}}\textsubscript{$\pm$0.87} & \cellcolor{lightgray}{\underline{59.72}}\textsubscript{$\pm$1.16} & \cellcolor{lightgray}{\underline{32.85}}\textsubscript{$\pm$0.59} & \cellcolor{lightgray}{\underline{31.59}}\textsubscript{$\pm$0.74} & \cellcolor{lightgray}{\underline{44.83}}\textsubscript{$\pm$0.93} \\
& & Embedding Fusion & 56.61\textsubscript{$\pm$0.79} & 37.46\textsubscript{$\pm$0.91} & 56.79\textsubscript{$\pm$0.98} & 30.15\textsubscript{$\pm$0.46} & 29.31\textsubscript{$\pm$1.08} & 42.06\textsubscript{$\pm$1.03} \\
& & \textbf{CBI} & \cellcolor{lightblue}{\textbf{64.60}}\textsubscript{$\pm$0.95} & \cellcolor{lightblue}{\textbf{60.97}}\textsubscript{$\pm$1.19} & \cellcolor{lightblue}{\textbf{64.20}}\textsubscript{$\pm$1.44} & \cellcolor{lightblue}{\textbf{77.90}}\textsubscript{$\pm$1.58} & \cellcolor{lightblue}{\textbf{70.53}}\textsubscript{$\pm$1.24} & \cellcolor{lightblue}{\textbf{67.64}}\textsubscript{$\pm$1.23} \\
\cmidrule(lr){2-9}
& \multirow{3}{*}{LoRA} 
& Text Concatenation & \cellcolor{lightgray}{\underline{57.58}}\textsubscript{$\pm$0.84} & \cellcolor{lightgray}{\underline{47.96}}\textsubscript{$\pm$1.11} & \cellcolor{lightgray}{\underline{61.11}}\textsubscript{$\pm$1.23} & \cellcolor{lightgray}{\underline{61.75}}\textsubscript{$\pm$0.63} & \cellcolor{lightgray}{\underline{62.94}}\textsubscript{$\pm$0.89} & \cellcolor{lightgray}{\underline{58.27}}\textsubscript{$\pm$0.73} \\
& & Embedding Fusion & 56.61\textsubscript{$\pm$0.77} & 37.46\textsubscript{$\pm$1.18} & 56.79\textsubscript{$\pm$0.85} & 30.15\textsubscript{$\pm$1.04} & 29.31\textsubscript{$\pm$0.91} & 42.06\textsubscript{$\pm$1.12} \\
& & \textbf{CBI} & \cellcolor{lightblue}{\textbf{68.31}}\textsubscript{$\pm$0.77} & \cellcolor{lightblue}{\textbf{58.78}}\textsubscript{$\pm$1.09} & \cellcolor{lightblue}{\textbf{70.52}}\textsubscript{$\pm$0.45} & \cellcolor{lightblue}{\textbf{86.30}}\textsubscript{$\pm$0.47} & \cellcolor{lightblue}{\textbf{75.43}}\textsubscript{$\pm$0.79} & \cellcolor{lightblue}{\textbf{71.87}}\textsubscript{$\pm$0.55} \\
\midrule
& \multirow{1}{*}{/} 
& \textcolor{black}{FFT~(upper bound)} 
& 74.93\textsubscript{$\pm$0.39} 
& 81.47\textsubscript{$\pm$0.31} 
& 77.38\textsubscript{$\pm$0.44} 
& 90.39\textsubscript{$\pm$0.36} 
& 80.79\textsubscript{$\pm$0.28} 
& 80.99\textsubscript{$\pm$0.39} \\ 
& \multirow{1}{*}{/}  
& \textcolor{black}{ZST~(lower bound)} 
& 43.27\textsubscript{$\pm$1.58} 
& 33.89\textsubscript{$\pm$1.91} 
& 41.03\textsubscript{$\pm$1.47} 
& 28.83\textsubscript{$\pm$2.03} 
& 21.47\textsubscript{$\pm$1.69} 
& 33.70\textsubscript{$\pm$1.61} \\ 
\cmidrule(lr){2-9}
\multirow{2}{*}{\textbf{Qwen2.5-7B}} 
& \multirow{3}{*}{Adapter} 
& Text Concatenation & \cellcolor{lightgray}{\underline{61.69}}\textsubscript{$\pm$0.92} & \cellcolor{lightgray}{\underline{47.65}}\textsubscript{$\pm$1.11} & \cellcolor{lightgray}{\underline{62.50}}\textsubscript{$\pm$0.58} & \cellcolor{lightgray}{\underline{51.35}}\textsubscript{$\pm$1.31} & \cellcolor{lightgray}{\underline{61.14}}\textsubscript{$\pm$0.44} & \cellcolor{lightgray}{\underline{56.87}}\textsubscript{$\pm$0.73} \\
& & Embedding Fusion & 58.31\textsubscript{$\pm$0.66} & 41.69\textsubscript{$\pm$1.27} & 59.72\textsubscript{$\pm$0.49} & 32.85\textsubscript{$\pm$0.95} & 31.59\textsubscript{$\pm$0.37} & 44.83\textsubscript{$\pm$1.12} \\
& & \textbf{CBI} & \cellcolor{lightblue}{\textbf{61.94}}\textsubscript{$\pm$1.08} & \cellcolor{lightblue}{\textbf{59.72}}\textsubscript{$\pm$1.21} & \cellcolor{lightblue}{\textbf{63.58}}\textsubscript{$\pm$1.43} & \cellcolor{lightblue}{\textbf{65.65}}\textsubscript{$\pm$1.54} & \cellcolor{lightblue}{\textbf{63.76}}\textsubscript{$\pm$1.12} & \cellcolor{lightblue}{\textbf{62.93}}\textsubscript{$\pm$1.27} \\
\cmidrule(lr){2-9}
& \multirow{3}{*}{LoRA} 
& Text Concatenation & \cellcolor{lightgray}{\underline{58.95}}\textsubscript{$\pm$0.73} & \cellcolor{lightgray}{\underline{55.80}}\textsubscript{$\pm$1.22} & \cellcolor{lightgray}{\underline{63.43}}\textsubscript{$\pm$0.47} & \cellcolor{lightgray}{\underline{65.25}}\textsubscript{$\pm$1.08} & \cellcolor{lightblue}{\textbf{64.08}}\textsubscript{$\pm$0.41} & \cellcolor{lightgray}{\underline{61.50}}\textsubscript{$\pm$0.95} \\
& & Embedding Fusion & 50.73\textsubscript{$\pm$1.19} & 36.05\textsubscript{$\pm$0.62} & 51.54\textsubscript{$\pm$0.88} & 27.30\textsubscript{$\pm$1.37} & 29.14\textsubscript{$\pm$0.52} & 38.95\textsubscript{$\pm$0.69} \\
& & \textbf{CBI} & \cellcolor{lightblue}{\textbf{62.74}}\textsubscript{$\pm$1.20} & \cellcolor{lightblue}{\textbf{59.72}}\textsubscript{$\pm$0.59} & \cellcolor{lightblue}{\textbf{65.12}}\textsubscript{$\pm$0.64} & \cellcolor{lightblue}{\textbf{76.70}}\textsubscript{$\pm$0.81} & \cellcolor{lightgray}{\underline{64.00}}\textsubscript{$\pm$0.68} & \cellcolor{lightblue}{\textbf{65.66}}\textsubscript{$\pm$0.83} \\
\midrule
& \multirow{1}{*}{/} 
& \textcolor{black}{FFT~(upper bound)} & 69.02\textsubscript{$\pm$0.42} & 72.79\textsubscript{$\pm$0.55} & 73.36\textsubscript{$\pm$0.63} & 91.71\textsubscript{$\pm$0.48} & 68.78\textsubscript{$\pm$0.51} & 75.13\textsubscript{$\pm$0.60} \\ 
& \multirow{1}{*}{/}  
& \textcolor{black}{ZST~(lower bound)} 
& 28.47\textsubscript{$\pm$1.72} 
& 27.81\textsubscript{$\pm$1.95} 
& 27.83\textsubscript{$\pm$2.10} 
& 30.65\textsubscript{$\pm$1.84} 
& 17.37\textsubscript{$\pm$2.25} 
& 26.43\textsubscript{$\pm$1.67} \\ 
\cmidrule(lr){2-9}
\multirow{2}{*}{\textbf{Llama-3.1-8B}} 
& \multirow{3}{*}{Adapter} 
& {Text Concatenation} & \cellcolor{lightgray}{\underline{61.69}}\textsubscript{$\pm$0.77} & \cellcolor{lightgray}{\underline{51.88}}\textsubscript{$\pm$1.36} & \cellcolor{lightblue}{\textbf{63.43}}\textsubscript{$\pm$0.59} & \cellcolor{lightgray}{\underline{62.30}}\textsubscript{$\pm$0.42} & \cellcolor{lightgray}{\underline{60.98}}\textsubscript{$\pm$1.18} & \cellcolor{lightgray}{\underline{60.06}}\textsubscript{$\pm$0.88} \\
& & Embedding Fusion & 57.02\textsubscript{$\pm$1.08} & 38.40\textsubscript{$\pm$0.64} & 59.57\textsubscript{$\pm$0.91} & 31.30\textsubscript{$\pm$0.55} & 27.18\textsubscript{$\pm$1.21} & 42.69\textsubscript{$\pm$0.73} \\
& & \textbf{CBI} & \cellcolor{lightblue}{\textbf{64.84}}\textsubscript{$\pm$0.88} & \cellcolor{lightblue}{\textbf{55.80}}\textsubscript{$\pm$1.14} & \cellcolor{lightgray}{\underline{63.43}}\textsubscript{$\pm$1.41} & \cellcolor{lightblue}{\textbf{89.20}}\textsubscript{$\pm$1.52} & \cellcolor{lightblue}{\textbf{72.98}}\textsubscript{$\pm$1.09} & \cellcolor{lightblue}{\textbf{69.25}}\textsubscript{$\pm$1.21} \\
\cmidrule(lr){2-9}
& \multirow{3}{*}{LoRA} 
& {Text Concatenation} & \cellcolor{lightgray}{\underline{65.40}}\textsubscript{$\pm$1.03} & \cellcolor{lightblue}{\textbf{58.93}}\textsubscript{$\pm$0.71} & \cellcolor{lightgray}{\underline{65.43}}\textsubscript{$\pm$1.19} & \cellcolor{lightgray}{\underline{71.90}}\textsubscript{$\pm$0.68} & \cellcolor{lightgray}{\underline{63.02}}\textsubscript{$\pm$1.34} & \cellcolor{lightgray}{\underline{64.94}}\textsubscript{$\pm$0.82} \\
& & Embedding Fusion & 54.52\textsubscript{$\pm$0.69} & 36.99\textsubscript{$\pm$1.24} & 57.10\textsubscript{$\pm$0.81} & 30.55\textsubscript{$\pm$1.41} & 28.90\textsubscript{$\pm$0.63} & 41.61\textsubscript{$\pm$0.97} \\
& & \textbf{CBI} & \cellcolor{lightblue}{\textbf{71.05}}\textsubscript{$\pm$0.36} & \cellcolor{lightgray}{\underline{58.78}}\textsubscript{$\pm$0.98} & \cellcolor{lightblue}{\textbf{69.44}}\textsubscript{$\pm$1.33} & \cellcolor{lightblue}{\textbf{89.35}}\textsubscript{$\pm$0.56} & \cellcolor{lightblue}{\textbf{77.88}}\textsubscript{$\pm$0.62} & \cellcolor{lightblue}{\textbf{73.30}}\textsubscript{$\pm$1.08} \\
\bottomrule
\end{tabular}
\caption{Performance comparison of Convolutional Bottleneck Interaction (CBI) with two baselines (concatenation of event embeddings and element-wise product of event embeddings)
under Adapter and LoRA PEFT settings evaluated on five temporal relation extraction datasets (micro-F1\%). 
Each score shows the mean\textsubscript{$\pm$std} of three runs. 
\textcolor{black}{Full fine-tuning (FFT) and zero-shot tuning (ZST) are reported as reference upper and lower bounds for PEFT methods, respectively.}
The best score in each setting (with the same backbone and PEFT method) is shown in \cellcolor{lightblue}\textbf{bold blue}, the second-best is shown in \cellcolor{lightgray}\underline{gray underlined}.}
\label{tab:main_results_tiny}
\end{table*}

\section{Experiments}
We conduct extensive empirical evaluations across diverse datasets and model families to demonstrate the utility of our framework. Moreover, we perform several experiments to investigate the effects of various design choices. 
\subsection{Benchmarks and Setup}
We evaluate the proposed {Convolutional Bottleneck Interaction (CBI)} on five widely used benchmarks for temporal relation extraction: 
These benchmarks differ in the granularity of temporal reasoning they require, some focus on intra-sentential event pairs, while others emphasize inter-sentential (discourse-level) relations that require long-range context understanding.
Detailed statistics of the datasets are shown in Appendix~\ref{sec:dataset_stats}. 

\paragraph{Intra-sentential benchmarks:}

\noindent \textbf{TimeBank-Dense (TB-Dense)}~\citep{cassidy-etal-2014-annotation}: An extension of the TimeBank corpus where all event pairs in each document (including across sentences) are annotated with temporal links. Although it contains both intra- and inter-sentential pairs, the majority are sentence-internal, making it primarily a local reasoning benchmark.

\noindent \textbf{MATRES}~\citep{ning-etal-2018-improving}: A dataset that refines TimeBank-Dense by filtering out uncertain annotations, focusing on reliably labeled event pairs in news articles. It uses three labels (\textit{BEFORE}, \textit{AFTER}, \textit{SIMULTANEOUS}). Most event pairs occur within the same sentence, emphasizing local temporal cues.

\noindent \textbf{TCR (Temporal and Causal Reasoning)}~\citep{ning-etal-2018-joint}: A dataset of sentences annotated for both temporal and causal relations between events. We use only the temporal relations (\textit{BEFORE}, \textit{AFTER}, \textit{SIMULTANEOUS}). This benchmark tests fine-grained intra-sentential reasoning due to the dense annotation of single-sentence event pairs.

\paragraph{Inter-sentential (discourse-level) benchmarks:}

\noindent \textbf{TDDiscourse-Man (TDD-Man)}~\citep{naik2019tddiscourse}: A discourse-level temporal event ordering dataset. It contains event pairs that are often far apart in the text (multiple sentences away), emphasizing long-range temporal reasoning across discourse units. The label set follows an interval relation scheme (\textit{BEFORE}, \textit{AFTER}, \textit{INCLUDES}, \textit{IS\_INCLUDED}, \textit{SIMULTANEOUS}).

\noindent \textbf{TDDiscourse-Auto (TDD-Auto)}~\citep{naik2019tddiscourse}: An automatically constructed large-scale dataset inferred from news articles using heuristics and external time annotations. It shares the same label set as TDD-Man but is much larger (tens of thousands of pairs) and noisier. Both TDD variants mainly involve inter-sentential temporal relations.

\section{Implementation Details}

We experiment with seven representative backbones that span both encoder-only and decoder-style large language models: \textbf{BERT-large}, \textbf{RoBERTa-large}, and five instruction-tuned LLMs, including \textbf{Gemma2-2B}, \textbf{Phi-3.5-3.8B}, \textbf{Mistral-7B}, \textbf{Qwen2.5-7B}, and \textbf{Llama-3.1-8B}. Our method is evaluated in two widely used PEFT paradigms, \textbf{Adapter} and \textbf{LoRA}, to ensure generality across architectures. 

\textcolor{black}{Each event pair is represented by two contextual segments marked as 
[event1]...[/event1] and [event2]...[/event2], which are jointly fed into the model to predict their temporal relation.
For classification, we use the hidden state of the last non-padding token. Hidden states are projected to logits via a linear layer, and the logits at the last valid position are used for training and inference.}

In all experiments, we fine-tune each model on each dataset independently. Encoder-based models are trained with a batch size of 12 for up to 10 epochs, while LLMs, which are more computationally demanding, use a batch size of 8 under the same epoch limit. In both cases, we employ the AdamW optimizer with a learning rate of $1\times10^{-4}$ and a dropout rate of 0.1. The best checkpoint is selected according to validation micro-F1. Class imbalance is handled by reporting micro-F1 and, where appropriate, oversampling minority classes during training. 
The kernel size $k$ is set to 5.
The hidden-state dimensionality follows the default configuration of each backbone model.
Following~\citet{qu2024introducing}, the rank of bottleneck $r$ is fixed at 128 for both the adapter and LoRA settings.
All experiments are conducted on NVIDIA A100 GPU under a unified training protocol to ensure fair comparison.

\textcolor{black}{For full fine-tuning, we use a learning rate of $2\times10^{-5}$ with a batch size of 2 by following~\citet{pareja2024unveiling,kim2025rethinking}.
Training is conducted on two NVIDIA A100 GPUs using Fully Sharded Data Parallel (FSDP) with the {FULL\_SHARD} strategy. 
To reduce memory consumption, we enable bfloat16 (bf16) precision during training.}

\textcolor{black}{For zero-shot tuning, we use the following prompt:
"You are an expert in temporal information extraction. Determine the temporal relation between the two events based on their contexts. Possible relations: \{label set\}. Reply with ONLY one label from: \{input text\}. No explanation.".
The \{input text\} is:
"Event 1 context: \{text1\}. Event 2 context: \{text2\}. Temporal relation:" and "\{label set\}" is a list all labels.}

\subsection{Comparison with state-of-the-art Methods}

The experiments are designed to assess whether CBI generalizes under varying backbone capacities and PEFT configurations.
We compare CBI with two strong baselines: \textbf{Text Concatenation}~\citep{zhou-etal-2022-rsgt,roccabruna2024will}, a non-interactive approach that concatenates the two event texts into a single input sequence, encodes them into a joint representation, and then feeds this representation into a low-rank bottleneck, and \textbf{Embedding Fusion}~\citep{chaturvedi-etal-2025-temporal}, which performs event fusion at the input level by taking the element-wise product of the two event embeddings and using the fused embedding as the model input.\footnote{In the original design, the two event embeddings and their element-wise product are concatenated along the hidden dimension. In the PEFT setting, however, the backbone hidden size is fixed, making this three-way concatenation infeasible; following \citet{chaturvedi-etal-2025-temporal}, we therefore retain the element-wise product, which is the core fusion component.}
In contrast to Embedding Fusion, which completes event fusion before the bottleneck and propagates a single fused embedding, CBI places the interaction inside the low-rank bottleneck: the two events remain separate at the input, and the limited bottleneck capacity is explicitly devoted to modeling cross-event interactions, thereby strengthening event interaction precisely in a space where such dependencies are otherwise hard to learn.
The experimental results for five datasets and seven backbones are summarized in Table~\ref{tab:main_results_tiny}.

Across all configurations, CBI consistently achieves the highest micro-F1, demonstrating robust temporal relation extraction in PEFT settings.
In the Adapter setting, CBI improves RoBERTa-large from 44.83\% (Text Concatenation) and 49.94\% (Embedding Fusion) to 76.59\% (+31.76 percentage points, abbreviated as pp / +26.65pp), and BERT-large from 44.83\% / 50.19\% to 76.28\% (+31.45pp / +26.09pp).
A similar trend holds for LLMs, where CBI achieves gains of +11.44pp on Gemma2-2B, +7.08pp on Phi-3.5-3.8B, +22.81pp on Mistral-7B, +6.06pp on Qwen2.5-7B, and +9.19pp on Llama-3.1-8B compared to Text Concatenation.
Notably, Embedding Fusion provides limited or even negative gains on larger models, indicating that multiplication interaction of the input embeddings of the respective events is insufficient to capture complex temporal dependencies.
Under the LoRA configuration, CBI maintains clear superiority, improving over Text Concatenation by an average of +6.36pp, with the largest margin (+13.6pp) observed on Mistral-7B.

\textcolor{black}{
It is clear from Table~\ref{tab:temporal_routing_results_combined_full} that, within the same backbone, all PEFT methods achieve higher micro-F1 than zero-shot tuning.\footnote{Few-shots~(5-shots) learning performs slightly worse than zero-shot and exhibits higher variance, so we use zero-shot tuning performance as the lower bound for the PEFT method.} 
For example, instruction-tuned models such as Mistral-7B and Llama 3.1 8B are around 25 to 35 micro-F1 score in zero-shot, while PEFT results are mostly around 60 to 70 micro-F1 score. 
This shows that PEFT is necessary and zero-shot defines a clear lower bound.
However, PEFT is still worse than full fine-tuning, which defines the upper bound.\footnote{We applied full fine-tuning to both Text Concatenation and Embedding Fusion and found that Text Concatenation largely outperformed Embedding Fusion.
}
On Llama-3.1-8B, the best PEFT result is 73.30 micro-F1 score with LoRA and CBI. This is still below FFT performance (75.13 micro-F1 score), which indicates a slightly reduced representational capacity of the PEFT models. 
CBI consistently achieves the best results in the PEFT setting. Under Adapter, CBI improves average micro-F1 score from 60.06 (Text Concatenation) to 69.25. Under LoRA, it improves from 64.94 to 73.30 micro-F1 score. Similar gains are observed across datasets. This shows that CBI recovers more of the lost capacity.
}

\subsection{Effectiveness of CBI}

The core principle in our framework is to model the interaction in the feature (i.e., latent) space between two event representations using an element-wise mathematical operation. Therefore, we compare the effect of three different possible element-wise operations: Multiplication, Addition and Subtraction. 
Multiplication captures token co-activation but can amplify noise; Addition smooths the interaction but may blur important distinctions; Subtraction highlights contrasts but lacks cues for temporal order.\footnote{We take the absolute difference to avoid a bias caused by the textual position of events.} Note that none of these proposed variants perform the convolution operation. Consequently, Multiplication represents an ablated version of CBI.

Table~\ref{tab:temporal_routing_results_combined_full} shows the results of all three variants, as well as CBI for the adapters and LoRA settings in all backbone architectures and datasets. 
Although all four interaction functions are our designs, they reflect different ways events may interact through the low-rank bottlenecks. Empirically, CBI consistently outperforms these simpler operations. For example, on RoBERTa-large (Adapter), Multiplication reaches 73.35\% F1 and Subtraction 62.87\% F1, while CBI achieves 76.59\% F1. 
A similar pattern holds for Qwen-2.5-7B (LoRA), where CBI obtains 65.66\% F1 compared to 59.64\% for Multiplication and 61.53\% for Subtraction, but with much better stability and transferability between datasets. 
Overall, CBI preserves the efficiency of simple interaction functions while adding structure awareness through local depthwise convolutions and improving local coherence by filtering and aligning temporally relevant cues before cross-event interaction. This leads to more consistent and interpretable representations across architectures.

\begin{figure}[htbp]
\centering
\subfloat[\scriptsize{Adapter-based tuning.}]{\includegraphics[width=0.4\textwidth]{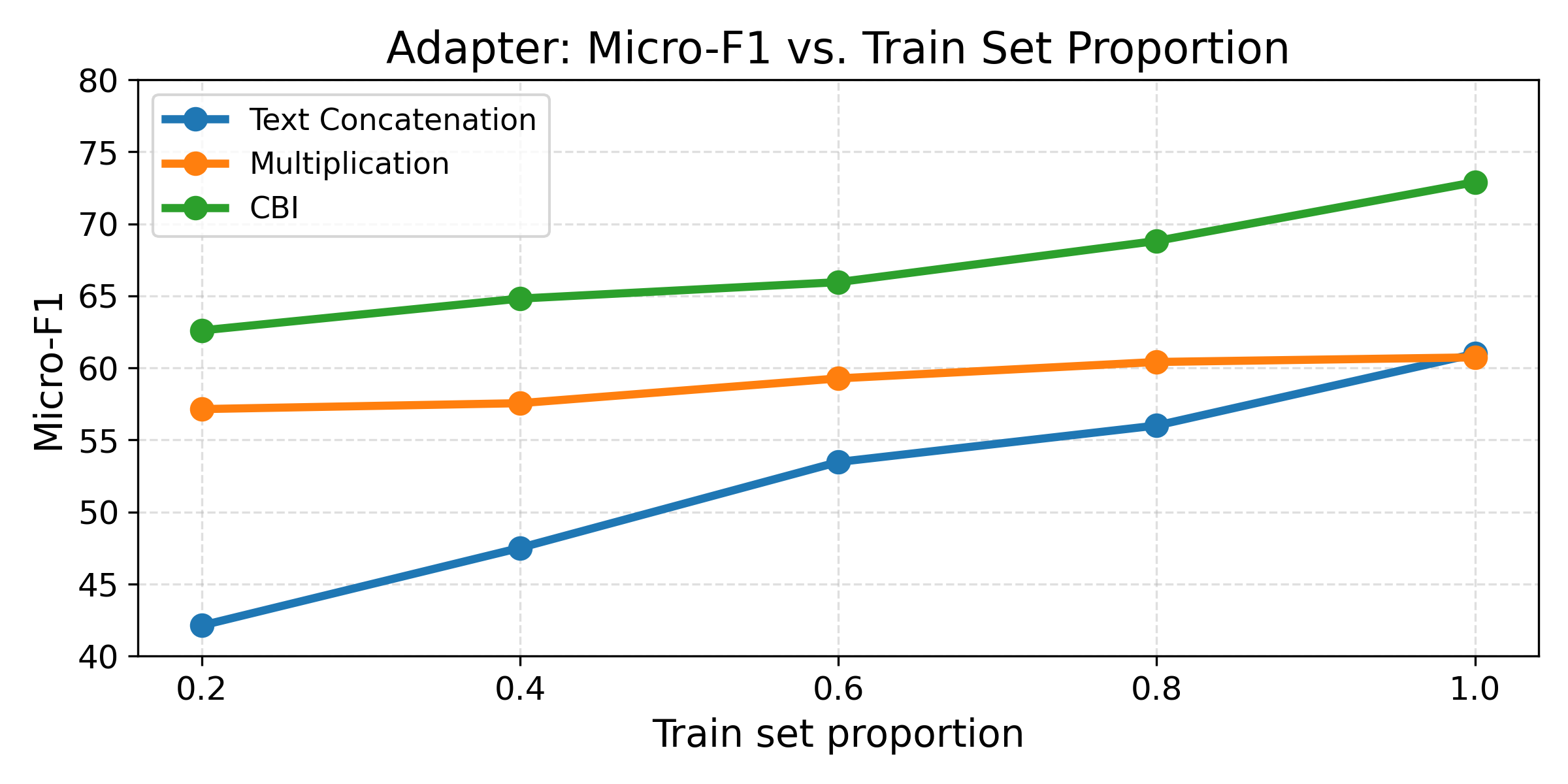}
\label{fig:adapter_curve}
}
\hfil
\subfloat[\scriptsize{LoRA-based tuning.}]{\includegraphics[width=0.4\textwidth]{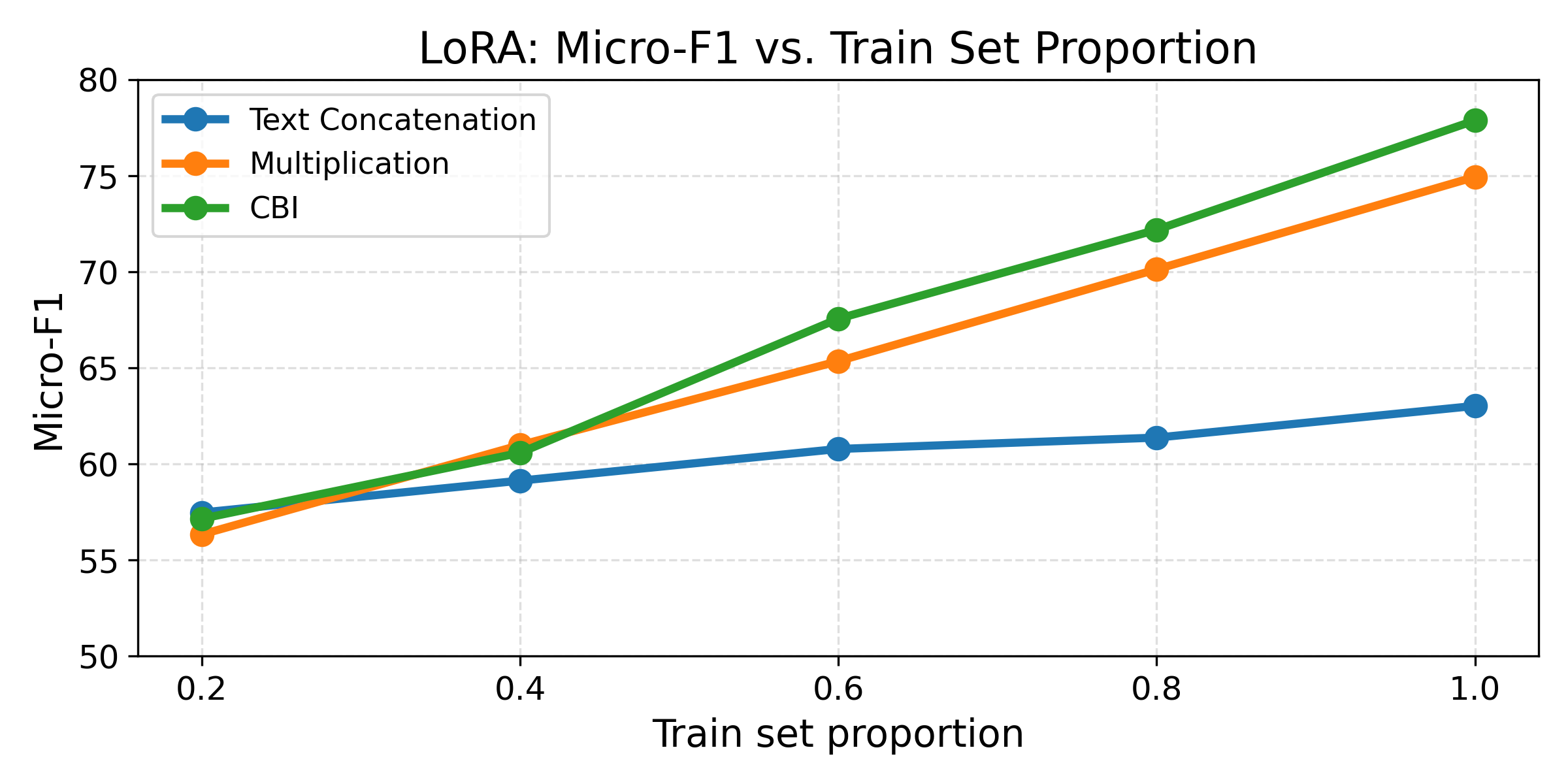}
\label{fig:lora_curve}
}
\caption{Effect of training data proportion on model performance. Both CBI and Multiplication are methods proposed in this paper, while Text Concatenation serves as the baseline.}
\label{fig:adapter_lora_curve}
\end{figure}

\begin{table*}[t!]
    \centering
    \tiny
    \renewcommand{\arraystretch}{1.15}
    \setlength{\tabcolsep}{9pt}
    
    \begin{tabular}{ll|l|ccccc|c}
        \toprule
        \textbf{Backbone} & \textbf{PEFT} & \textbf{Routing} &
        \textbf{Matres} & \textbf{TB-dense} & \textbf{TCR} & \textbf{TDD-auto} & \textbf{TDD-man} & \textbf{AVG} \\
        \midrule
        \multirow{8}{*}{\textbf{Roberta-large}}
        & Adapter & Multiplication & \cellcolor{lightgray}\underline{71.61\textsubscript{$\pm$0.78}} & \cellcolor{lightgray}\underline{60.82\textsubscript{$\pm$1.34}} & \cellcolor{lightgray}\underline{68.83\textsubscript{$\pm$1.22}} & \cellcolor{lightblue}\textbf{89.15\textsubscript{$\pm$1.49}} & \cellcolor{lightgray}\underline{76.33\textsubscript{$\pm$0.91}} & \cellcolor{lightgray}\underline{73.35\textsubscript{$\pm$1.18}} \\
        & Adapter & Addition     & 58.79\textsubscript{$\pm$1.55} & 41.69\textsubscript{$\pm$0.92} & 59.72\textsubscript{$\pm$1.36} & 56.35\textsubscript{$\pm$1.21} & 34.04\textsubscript{$\pm$1.44} & 50.12\textsubscript{$\pm$1.47} \\
        & Adapter & Subtraction        & 58.31\textsubscript{$\pm$1.09} & 41.69\textsubscript{$\pm$1.23} & 59.72\textsubscript{$\pm$1.28} & 85.40\textsubscript{$\pm$1.63} & 69.22\textsubscript{$\pm$0.98} & 62.87\textsubscript{$\pm$1.25} \\
        & Adapter & \textbf{CBI} & \cellcolor{lightblue}\textbf{74.35\textsubscript{$\pm$0.85}} & \cellcolor{lightblue}\textbf{68.18\textsubscript{$\pm$1.07}} & \cellcolor{lightblue}\textbf{73.92\textsubscript{$\pm$1.61}} & \cellcolor{lightgray}\underline{86.85\textsubscript{$\pm$1.32}} & \cellcolor{lightblue}\textbf{79.67\textsubscript{$\pm$1.18}} & \cellcolor{lightblue}\textbf{76.59\textsubscript{$\pm$1.22}} \\
        \cmidrule(lr){2-9}
        & LoRA   & Multiplication & 68.79\textsubscript{$\pm$1.31} & 60.34\textsubscript{$\pm$0.52} & \cellcolor{lightblue}\textbf{70.06\textsubscript{$\pm$0.95}} & \cellcolor{lightgray}\underline{87.40}\textsubscript{$\pm$0.83} & 71.02\textsubscript{$\pm$0.59} & \cellcolor{lightgray}\underline{71.52\textsubscript{$\pm$0.90}} \\
        & LoRA   & Addition     & 66.45\textsubscript{$\pm$0.38} & \cellcolor{lightgray}\underline{61.60\textsubscript{$\pm$0.83}} & 65.43\textsubscript{$\pm$1.33} & 78.65\textsubscript{$\pm$0.68} & \cellcolor{lightgray}\underline{73.71\textsubscript{$\pm$1.08}} & 69.17\textsubscript{$\pm$0.99} \\
        & LoRA   & Subtraction        & \cellcolor{lightblue}\textbf{70.56\textsubscript{$\pm$1.22}} & 52.82\textsubscript{$\pm$1.10} & \cellcolor{lightgray}\underline{68.21}\textsubscript{$\pm$0.63} & 80.00\textsubscript{$\pm$0.72} & 68.00\textsubscript{$\pm$0.60} & 67.92\textsubscript{$\pm$0.52} \\
        & LoRA   & \textbf{CBI} & \cellcolor{lightgray}\underline{70.16\textsubscript{$\pm$1.38}} & \cellcolor{lightblue}\textbf{62.70\textsubscript{$\pm$0.74}} & 67.75\textsubscript{$\pm$1.20} & \cellcolor{lightblue}\textbf{89.35\textsubscript{$\pm$0.83}} & \cellcolor{lightblue}\textbf{76.24\textsubscript{$\pm$1.27}} & \cellcolor{lightblue}\textbf{73.24\textsubscript{$\pm$1.00}} \\
        \midrule
        \multirow{8}{*}{\textbf{Bert-large}}
        & Adapter & Multiplication & \cellcolor{lightblue}\textbf{74.35\textsubscript{$\pm$0.83}} & \cellcolor{lightgray}\underline{59.56}\textsubscript{$\pm$1.31} & \cellcolor{lightgray}\underline{68.21}\textsubscript{$\pm$1.25} & \cellcolor{lightgray}\underline{87.25}\textsubscript{$\pm$1.48} & \cellcolor{lightgray}\underline{75.84\textsubscript{$\pm$1.02}} & \cellcolor{lightgray}\underline{73.04\textsubscript{$\pm$1.17}} \\
        & Adapter & Addition     & 58.31\textsubscript{$\pm$1.26} & 46.87\textsubscript{$\pm$1.12} & 59.72\textsubscript{$\pm$1.09} & 73.60\textsubscript{$\pm$1.34} & 60.00\textsubscript{$\pm$1.72} & 59.70\textsubscript{$\pm$1.33} \\
        & Adapter & Subtraction        & 58.31\textsubscript{$\pm$1.05} & 53.76\textsubscript{$\pm$1.42} & 59.72\textsubscript{$\pm$1.14} & 83.05\textsubscript{$\pm$1.69} & 72.49\textsubscript{$\pm$1.21} & 65.47\textsubscript{$\pm$1.36} \\
        & Adapter & \textbf{CBI} & \cellcolor{lightgray}\underline{71.69}\textsubscript{$\pm$0.97} & \cellcolor{lightblue}\textbf{67.55\textsubscript{$\pm$1.08}} & \cellcolor{lightblue}\textbf{72.53\textsubscript{$\pm$1.51}} & \cellcolor{lightblue}\textbf{90.60\textsubscript{$\pm$1.44}} & \cellcolor{lightblue}\textbf{79.02\textsubscript{$\pm$1.19}} & \cellcolor{lightblue}\textbf{76.28\textsubscript{$\pm$1.21}} \\
        \cmidrule(lr){2-9}
        & LoRA   & Multiplication & 67.34\textsubscript{$\pm$1.25} & \cellcolor{lightgray}\underline{61.13}\textsubscript{$\pm$0.99} & \cellcolor{lightblue}\textbf{69.14\textsubscript{$\pm$0.62}} & \cellcolor{lightgray}\underline{87.95\textsubscript{$\pm$1.13}} & \cellcolor{lightgray}\underline{74.53\textsubscript{$\pm$1.11}} & \cellcolor{lightgray}\underline{72.02\textsubscript{$\pm$0.40}} \\
        & LoRA   & Addition     & 67.42\textsubscript{$\pm$0.60} & 57.52\textsubscript{$\pm$1.19} & 65.59\textsubscript{$\pm$0.98} & 80.25\textsubscript{$\pm$1.14} & 71.76\textsubscript{$\pm$0.46} & 68.51\textsubscript{$\pm$0.65} \\
        & LoRA   & Subtraction        & \cellcolor{lightgray}\underline{70.56\textsubscript{$\pm$1.04}} & 52.98\textsubscript{$\pm$0.57} & 66.82\textsubscript{$\pm$1.09} & 83.85\textsubscript{$\pm$1.09} & 71.35\textsubscript{$\pm$0.64} & 69.11\textsubscript{$\pm$1.36} \\
        & LoRA   & \textbf{CBI} & \cellcolor{lightblue}\textbf{71.94\textsubscript{$\pm$1.05}} & \cellcolor{lightblue}\textbf{65.36\textsubscript{$\pm$0.89}} & \cellcolor{lightgray}\underline{68.21}\textsubscript{$\pm$0.62} & \cellcolor{lightblue}\textbf{88.25\textsubscript{$\pm$1.19}} & \cellcolor{lightblue}\textbf{76.82\textsubscript{$\pm$0.53}} & \cellcolor{lightblue}\textbf{74.12\textsubscript{$\pm$0.76}} \\
        \midrule
        \multirow{8}{*}{\textbf{Gemma2 2B}}
        & Adapter & Multiplication & 58.87\textsubscript{$\pm$1.34} & 53.61\textsubscript{$\pm$1.17} & 63.12\textsubscript{$\pm$1.22} & \cellcolor{lightgray}\underline{63.60\textsubscript{$\pm$1.31}} & 62.29\textsubscript{$\pm$1.09} & 60.30\textsubscript{$\pm$1.37} \\
        & Adapter & Addition     & 59.52\textsubscript{$\pm$1.21} & 57.99\textsubscript{$\pm$1.47} & \cellcolor{lightgray}\underline{63.89}\textsubscript{$\pm$1.33} & 60.90\textsubscript{$\pm$1.28} & 65.31\textsubscript{$\pm$1.52} & 61.52\textsubscript{$\pm$1.44} \\
        & Adapter & Subtraction        & \cellcolor{lightgray}\underline{61.13\textsubscript{$\pm$0.89}} & \cellcolor{lightgray}\underline{58.31\textsubscript{$\pm$1.63}} & 63.58\textsubscript{$\pm$1.17} & 59.55\textsubscript{$\pm$1.26} & \cellcolor{lightgray}\underline{65.39\textsubscript{$\pm$1.31}} & \cellcolor{lightgray}\underline{61.59\textsubscript{$\pm$1.35}} \\
        & Adapter & \textbf{CBI} & \cellcolor{lightblue}\textbf{62.58\textsubscript{$\pm$1.02}} & \cellcolor{lightblue}\textbf{59.09\textsubscript{$\pm$1.14}} & \cellcolor{lightblue}\textbf{64.20\textsubscript{$\pm$1.48}} & \cellcolor{lightblue}\textbf{77.20\textsubscript{$\pm$1.55}} & \cellcolor{lightblue}\textbf{68.24\textsubscript{$\pm$1.27}} & \cellcolor{lightblue}\textbf{66.26\textsubscript{$\pm$1.29}} \\
        \cmidrule(lr){2-9}
        & LoRA   & Multiplication & \cellcolor{lightgray}\underline{63.55\textsubscript{$\pm$0.60}} & 55.80\textsubscript{$\pm$0.69} & 64.35\textsubscript{$\pm$0.93} & \cellcolor{lightgray}\underline{67.95\textsubscript{$\pm$0.70}} & \cellcolor{lightgray}\underline{65.55\textsubscript{$\pm$0.58}} & \cellcolor{lightgray}\underline{63.44\textsubscript{$\pm$1.37}} \\
        & LoRA   & Addition     & 61.53\textsubscript{$\pm$1.30} & 56.43\textsubscript{$\pm$1.10} & 63.58\textsubscript{$\pm$1.09} & 59.50\textsubscript{$\pm$0.90} & 62.37\textsubscript{$\pm$1.34} & 60.68\textsubscript{$\pm$0.85} \\
        & LoRA   & Subtraction        & 61.45\textsubscript{$\pm$0.64} & \cellcolor{lightgray}\underline{57.68\textsubscript{$\pm$1.14}} & \cellcolor{lightblue}\textbf{64.81\textsubscript{$\pm$0.97}} & 60.15\textsubscript{$\pm$0.55} & 61.71\textsubscript{$\pm$0.98} & 61.16\textsubscript{$\pm$1.32} \\
        & LoRA   & \textbf{CBI} & \cellcolor{lightblue}\textbf{66.53\textsubscript{$\pm$0.78}} & \cellcolor{lightblue}\textbf{59.40\textsubscript{$\pm$0.58}} & \cellcolor{lightgray}\underline{64.51\textsubscript{$\pm$0.84}} & \cellcolor{lightblue}\textbf{83.35\textsubscript{$\pm$0.81}} & \cellcolor{lightblue}\textbf{69.47\textsubscript{$\pm$0.45}} & \cellcolor{lightblue}\textbf{68.65\textsubscript{$\pm$0.40}} \\
        \midrule
        \multirow{8}{*}{\textbf{Phi-3.5 3.8B}}
        & Adapter & Multiplication & \cellcolor{lightblue}\textbf{63.31\textsubscript{$\pm$1.19}} & 58.31\textsubscript{$\pm$1.22} & 63.58\textsubscript{$\pm$1.07} & \cellcolor{lightgray}\underline{61.55\textsubscript{$\pm$1.45}} & 61.96\textsubscript{$\pm$1.12} & 61.74\textsubscript{$\pm$1.33} \\
        & Adapter & Addition     & \cellcolor{lightgray}\underline{62.82}\textsubscript{$\pm$1.07} & 58.46\textsubscript{$\pm$1.28} & \cellcolor{lightgray}\underline{64.33}\textsubscript{$\pm$1.21} & 61.50\textsubscript{$\pm$1.56} & \cellcolor{lightgray}\underline{63.67\textsubscript{$\pm$1.09}} & \cellcolor{lightgray}\underline{62.25\textsubscript{$\pm$1.27}} \\
        & Adapter & Subtraction        & 62.74\textsubscript{$\pm$1.44} & \cellcolor{lightgray}\underline{58.62\textsubscript{$\pm$1.03}} & \cellcolor{lightblue}\textbf{64.81\textsubscript{$\pm$1.35}} & 60.45\textsubscript{$\pm$1.72} & 63.35\textsubscript{$\pm$1.18} & 61.99\textsubscript{$\pm$1.39} \\
        & Adapter & \textbf{CBI} & 61.21\textsubscript{$\pm$0.92} & \cellcolor{lightblue}\textbf{59.09\textsubscript{$\pm$1.17}} & 63.89\textsubscript{$\pm$1.42} & \cellcolor{lightblue}\textbf{71.60\textsubscript{$\pm$1.34}} & \cellcolor{lightblue}\textbf{67.59\textsubscript{$\pm$1.26}} & \cellcolor{lightblue}\textbf{64.68\textsubscript{$\pm$1.21}} \\
        \cmidrule(lr){2-9}
        & LoRA   & Multiplication & 62.34\textsubscript{$\pm$1.43} & 54.86\textsubscript{$\pm$0.99} & 64.04\textsubscript{$\pm$0.88} & \cellcolor{lightgray}\underline{64.65\textsubscript{$\pm$1.13}} & \cellcolor{lightgray}\underline{64.08\textsubscript{$\pm$0.36}} & \cellcolor{lightgray}\underline{61.99\textsubscript{$\pm$1.13}} \\
        & LoRA   & Addition     & 60.24\textsubscript{$\pm$0.63} & 57.21\textsubscript{$\pm$0.84} & 64.81\textsubscript{$\pm$0.85} & 60.30\textsubscript{$\pm$1.06} & 60.98\textsubscript{$\pm$0.89} & 60.71\textsubscript{$\pm$0.64} \\
        & LoRA   & Subtraction        & \cellcolor{lightgray}\underline{62.66\textsubscript{$\pm$0.58}} & \cellcolor{lightblue}\textbf{60.50\textsubscript{$\pm$0.96}} & \cellcolor{lightgray}\underline{65.59\textsubscript{$\pm$0.76}} & 59.70\textsubscript{$\pm$1.18} & 60.57\textsubscript{$\pm$0.77} & 61.80\textsubscript{$\pm$0.61} \\
        & LoRA   & \textbf{CBI} & \cellcolor{lightblue}\textbf{70.00\textsubscript{$\pm$1.14}} & \cellcolor{lightgray}\underline{59.25\textsubscript{$\pm$0.91}} & \cellcolor{lightblue}\textbf{70.22\textsubscript{$\pm$1.11}} & \cellcolor{lightblue}\textbf{84.25\textsubscript{$\pm$1.06}} & \cellcolor{lightblue}\textbf{72.90\textsubscript{$\pm$0.85}} & \cellcolor{lightblue}\textbf{71.32\textsubscript{$\pm$0.60}} \\
        \midrule
        \multirow{8}{*}{\textbf{Mistral-7B}}
        & Adapter & Multiplication & \cellcolor{lightgray}\underline{63.87\textsubscript{$\pm$1.22}} & \cellcolor{lightgray}\underline{54.23\textsubscript{$\pm$1.38}} & \cellcolor{lightgray}\underline{61.88\textsubscript{$\pm$1.29}} & \cellcolor{lightgray}\underline{62.30\textsubscript{$\pm$1.51}} & \cellcolor{lightgray}\underline{61.63\textsubscript{$\pm$1.11}} & \cellcolor{lightgray}\underline{60.78\textsubscript{$\pm$1.28}} \\
        & Adapter & Addition     & 58.31\textsubscript{$\pm$1.18} & 41.69\textsubscript{$\pm$1.27} & 59.72\textsubscript{$\pm$1.33} & 32.85\textsubscript{$\pm$1.47} & 46.12\textsubscript{$\pm$1.36} & 47.74\textsubscript{$\pm$1.42} \\
        & Adapter & Subtraction        & 58.31\textsubscript{$\pm$1.06} & 41.69\textsubscript{$\pm$1.41} & 59.72\textsubscript{$\pm$1.21} & 37.20\textsubscript{$\pm$1.68} & 31.59\textsubscript{$\pm$1.09} & 45.70\textsubscript{$\pm$1.37} \\
        & Adapter & \textbf{CBI} & \cellcolor{lightblue}\textbf{64.60\textsubscript{$\pm$0.95}} & \cellcolor{lightblue}\textbf{60.97\textsubscript{$\pm$1.19}} & \cellcolor{lightblue}\textbf{64.20\textsubscript{$\pm$1.44}} & \cellcolor{lightblue}\textbf{77.90\textsubscript{$\pm$1.58}} & \cellcolor{lightblue}\textbf{70.53\textsubscript{$\pm$1.24}} & \cellcolor{lightblue}\textbf{67.64\textsubscript{$\pm$1.23}} \\
        \cmidrule(lr){2-9}
        & LoRA   & Multiplication & \cellcolor{lightgray}\underline{68.06\textsubscript{$\pm$0.45}} & \cellcolor{lightblue}\textbf{60.19\textsubscript{$\pm$1.07}} & \cellcolor{lightgray}\underline{67.75\textsubscript{$\pm$1.22}} & \cellcolor{lightgray}\underline{82.25\textsubscript{$\pm$0.95}} & \cellcolor{lightgray}\underline{72.24\textsubscript{$\pm$1.19}} & \cellcolor{lightgray}\underline{70.10\textsubscript{$\pm$0.90}} \\
        & LoRA   & Addition     & 62.74\textsubscript{$\pm$0.88} & 56.43\textsubscript{$\pm$1.17} & 64.97\textsubscript{$\pm$0.45} & 59.60\textsubscript{$\pm$0.61} & 60.49\textsubscript{$\pm$0.64} & 60.85\textsubscript{$\pm$1.06} \\
        & LoRA   & Subtraction        & 63.95\textsubscript{$\pm$0.55} & 56.90\textsubscript{$\pm$0.92} & 64.35\textsubscript{$\pm$0.40} & 60.20\textsubscript{$\pm$0.99} & 59.18\textsubscript{$\pm$0.77} & 60.92\textsubscript{$\pm$0.85} \\
        & LoRA   & \textbf{CBI} & \cellcolor{lightblue}\textbf{68.31\textsubscript{$\pm$0.77}} & \cellcolor{lightgray}\underline{58.78\textsubscript{$\pm$1.09}} & \cellcolor{lightblue}\textbf{70.52\textsubscript{$\pm$0.45}} & \cellcolor{lightblue}\textbf{86.30\textsubscript{$\pm$0.47}} & \cellcolor{lightblue}\textbf{75.43\textsubscript{$\pm$0.79}} & \cellcolor{lightblue}\textbf{71.87\textsubscript{$\pm$0.55}} \\
        \midrule
        \multirow{8}{*}{\textbf{Qwen2.5 7B}}
        & Adapter & Multiplication & 60.08\textsubscript{$\pm$1.11} & 53.29\textsubscript{$\pm$1.33} & 61.88\textsubscript{$\pm$1.24} & \cellcolor{lightgray}\underline{60.40}\textsubscript{$\pm$1.42} & 60.73\textsubscript{$\pm$1.09} & 59.28\textsubscript{$\pm$1.25} \\
        & Adapter & Addition     & \cellcolor{lightblue}\textbf{63.47\textsubscript{$\pm$0.97}} & 57.52\textsubscript{$\pm$1.26} & \cellcolor{lightblue}\textbf{64.20\textsubscript{$\pm$1.35}} & 60.25\textsubscript{$\pm$1.31} & \cellcolor{lightgray}\underline{62.45\textsubscript{$\pm$1.41}} & 61.58\textsubscript{$\pm$1.23} \\
        & Adapter & Subtraction        & \cellcolor{lightgray}\underline{63.31\textsubscript{$\pm$1.15}} & \cellcolor{lightgray}\underline{58.15\textsubscript{$\pm$1.18}} & \cellcolor{lightgray}\underline{64.18\textsubscript{$\pm$1.17}} & 60.20\textsubscript{$\pm$1.37} & 62.20\textsubscript{$\pm$1.28} & \cellcolor{lightgray}\underline{61.61\textsubscript{$\pm$1.29}} \\
        & Adapter & \textbf{CBI} & 61.94\textsubscript{$\pm$1.08} & \cellcolor{lightblue}\textbf{59.72\textsubscript{$\pm$1.21}} & 63.58\textsubscript{$\pm$1.43} & \cellcolor{lightblue}\textbf{65.65\textsubscript{$\pm$1.54}} & \cellcolor{lightblue}\textbf{63.76\textsubscript{$\pm$1.12}} & \cellcolor{lightblue}\textbf{62.93\textsubscript{$\pm$1.27}} \\
        \cmidrule(lr){2-9}
        & LoRA   & Multiplication & 60.16\textsubscript{$\pm$1.22} & 56.58\textsubscript{$\pm$0.58} & 61.42\textsubscript{$\pm$0.60} & \cellcolor{lightgray}\underline{60.30\textsubscript{$\pm$0.53}} & 59.76\textsubscript{$\pm$0.48} & 59.64\textsubscript{$\pm$0.67} \\
        & LoRA   & Addition     & 62.26\textsubscript{$\pm$0.88} & 57.05\textsubscript{$\pm$0.62} & \cellcolor{lightgray}\underline{63.12}\textsubscript{$\pm$0.82} & 56.90\textsubscript{$\pm$0.82} & \cellcolor{lightgray}\underline{62.94\textsubscript{$\pm$0.52}} & 60.45\textsubscript{$\pm$0.79} \\
        & LoRA   & Subtraction        & \cellcolor{lightblue}\textbf{63.87\textsubscript{$\pm$0.99}} & \cellcolor{lightgray}\underline{58.46\textsubscript{$\pm$0.65}} & 62.81\textsubscript{$\pm$0.90} & 60.05\textsubscript{$\pm$1.34} & 62.45\textsubscript{$\pm$0.92} & \cellcolor{lightgray}\underline{61.53\textsubscript{$\pm$0.73}} \\
        & LoRA   & \textbf{CBI} & \cellcolor{lightgray}\underline{62.74}\textsubscript{$\pm$1.20} & \cellcolor{lightblue}\textbf{59.72\textsubscript{$\pm$0.59}} & \cellcolor{lightblue}\textbf{65.12\textsubscript{$\pm$0.64}} & \cellcolor{lightblue}\textbf{76.70\textsubscript{$\pm$0.81}} & \cellcolor{lightblue}\textbf{64.00\textsubscript{$\pm$0.68}} & \cellcolor{lightblue}\textbf{65.66\textsubscript{$\pm$0.83}} \\
        \midrule
        \multirow{8}{*}{\textbf{Llama 3.1 8B}}
        & Adapter & Multiplication & 62.50\textsubscript{$\pm$0.91} & 56.11\textsubscript{$\pm$1.28} & 61.27\textsubscript{$\pm$1.32} & \cellcolor{lightgray}\underline{70.40\textsubscript{$\pm$1.44}} & 60.73\textsubscript{$\pm$1.07} & \cellcolor{lightgray}\underline{62.20\textsubscript{$\pm$1.19}} \\
        & Adapter & Addition     & 62.90\textsubscript{$\pm$1.34} & \cellcolor{lightblue}\textbf{57.21\textsubscript{$\pm$1.23}} & \cellcolor{lightblue}\textbf{65.12\textsubscript{$\pm$1.25}} & 60.80\textsubscript{$\pm$1.57} & \cellcolor{lightgray}\underline{62.69\textsubscript{$\pm$1.16}} & 61.74\textsubscript{$\pm$1.26} \\
        & Adapter & Subtraction        & \cellcolor{lightgray}\underline{63.71\textsubscript{$\pm$1.07}} & \cellcolor{lightgray}\underline{56.27}\textsubscript{$\pm$1.36} & \cellcolor{lightgray}\underline{64.97\textsubscript{$\pm$1.18}} & 60.90\textsubscript{$\pm$1.61} & 62.29\textsubscript{$\pm$1.24} & 61.63\textsubscript{$\pm$1.22} \\
        & Adapter & \textbf{CBI} & \cellcolor{lightblue}\textbf{64.84\textsubscript{$\pm$0.88}} & 55.80\textsubscript{$\pm$1.14} & 63.43\textsubscript{$\pm$1.41} & \cellcolor{lightblue}\textbf{88.20\textsubscript{$\pm$1.52}} & \cellcolor{lightblue}\textbf{72.89\textsubscript{$\pm$1.09}} & \cellcolor{lightblue}\textbf{69.25\textsubscript{$\pm$1.21}} \\
        \cmidrule(lr){2-9}
        & LoRA   & Multiplication & \cellcolor{lightblue}\textbf{71.45\textsubscript{$\pm$0.74}} & \cellcolor{lightblue}\textbf{63.64\textsubscript{$\pm$0.60}} & \cellcolor{lightblue}\textbf{70.83\textsubscript{$\pm$0.52}} & \cellcolor{lightgray}\underline{88.15\textsubscript{$\pm$0.73}} & \cellcolor{lightgray}\underline{74.94}\textsubscript{$\pm$0.83} & \cellcolor{lightblue}\textbf{73.80\textsubscript{$\pm$0.53}} \\
        & LoRA   & Addition     & 62.82\textsubscript{$\pm$0.62} & 50.78\textsubscript{$\pm$1.13} & 61.88\textsubscript{$\pm$0.41} & 60.25\textsubscript{$\pm$0.43} & 60.16\textsubscript{$\pm$0.94} & 59.18\textsubscript{$\pm$1.13} \\
        & LoRA   & Subtraction        & 63.63\textsubscript{$\pm$0.95} & 56.74\textsubscript{$\pm$0.72} & 66.05\textsubscript{$\pm$0.45} & 59.25\textsubscript{$\pm$0.95} & 63.35\textsubscript{$\pm$0.95} & 61.80\textsubscript{$\pm$0.57} \\
        & LoRA   & \textbf{CBI} & \cellcolor{lightgray}\underline{71.05\textsubscript{$\pm$0.36}} & \cellcolor{lightgray}\underline{58.78}\textsubscript{$\pm$0.98} & \cellcolor{lightgray}\underline{69.44\textsubscript{$\pm$1.33}} & \cellcolor{lightblue}\textbf{89.35\textsubscript{$\pm$0.56}} & \cellcolor{lightblue}\textbf{77.88\textsubscript{$\pm$0.62}} & \cellcolor{lightgray}\underline{73.30\textsubscript{$\pm$1.08}} \\
        \bottomrule
    \end{tabular}
    \caption{Temporal relation extraction under two PEFT settings (Adapter / LoRA), comparing different interaction methods that help routing features through the low-rank bottlenecks and their backbone LLMs. 
    The best score in each setting (with the same backbone and PEFT method) is shown in \cellcolor{lightblue}\textbf{bold blue}, the second-best is shown in \cellcolor{lightgray}\underline{gray underlined}.}
    \label{tab:temporal_routing_results_combined_full}
\end{table*}

\subsubsection{Data Efficiency and Robustness of CBI}
To understand how different interaction methods behave when training data is limited, we train each method using varying portions of the dataset (20\%–100\%). 
This allows us to observe how well each mechanism maintains its performance as the amount of available data decreases and how sharply its effectiveness drops under data-scarce conditions.
Figure~\ref{fig:adapter_lora_curve} shows the micro-F1 scores of Text Concatenation, Multiplication, and CBI under both the Adapter and LoRA configurations.\footnote{We omit Embedding Fusion due to its consistently inferior performance.}

In the Adapter setting, Text Concatenation remains highly sensitive to data scarcity, increasing from 42.12\% to 60.98\% as data grows. Multiplication is more stable (57.14\% to 60.73\%). CBI achieves the strongest performance, starting at 62.60\% with 20\% of the data and reaching 72.89\% with full supervision, showing substantially better generalization under limited annotation.

\textcolor{black}{The LoRA setting exhibits broadly similar patterns, although the differences between methods become smaller in scenarios with extremely limited annotation.
}
With complete data, CBI again achieves the best micro-F1 (77.88\%). 
At moderate data levels (e.g., 60\%), CBI reaches 67.56\%, outperforming Multiplication (65.35\%) and Text Concatenation (60.78\%). 
Under extremely limited annotation conditions, performance differences narrow, likely because LoRA introduces more trainable parameters than Adapter-based tuning, partially mitigating underfitting when supervision is sparse.

Across both tuning strategies, Text Concatenation shows the steepest degradation as data decreases, while CBI degrades much more gracefully. This validates our hypothesis that limited temporal-relation annotations reduce the ability of the model to align event semantics, and that feature-level interaction is crucial in low-resource settings.

Overall, CBI is both data-efficient and robust: it 
maintains strong performance in the PEFT setting when supervision is scarce. Its convolutional expansion and gated nonlinearity strengthen local event interactions while suppressing noise, resulting in resilient and interpretable temporal representations.

\subsubsection{Noise Filtering and Cue Localization in CBI}

We revisit the example introduced in Section~\ref{sec:intro} to analyze why CBI outperforms the baselines at the bottleneck representation level.
We visualize the event representations of each model along with their interpretable interaction maps in Figure~\ref{fig:heatmap_all}.
(a) \textbf{Multiplication interaction} (Figure~\ref{fig:multi_heatmap}):  
The interaction map assigns high importance to nearly all tokens in sentence 2, but fails to isolate the key cues. This observation aligns with our earlier analysis: while multiplication captures co-activation patterns, it also amplifies noise under limited supervision, making it difficult to highlight decisive evidence.
(b) \textbf{Additive interaction} (Figure~\ref{fig:add_heatmap}):  
The additive variant highlights parts of sentence 2, including ``disputes'' and ``has stalled'', yet still fails to precisely identify the most decisive tokens. This supports our claim that additive interaction tends to smooth distinctions, distributing attention over broader regions rather than producing sharp, focused signals.
(c) \textbf{Subtraction interaction} (Figure~\ref{fig:minus_heatmap}):  
This variant fails entirely, placing most of its attention on tokens in sentence 1. 
Without an explicit directional comparison between the two events, the model struggles to determine which event representation should be emphasized.
(d, e) \textbf{CBI interaction}:  
Before applying the two separate depthwise convolutions on the respective event representations (Figure~\ref{fig:dci_before_heatmap}), the model identifies ``bedeviled'' and ``disputes'' in sentence 1 as important, while assigning high importance to most tokens in sentence 2, similar to multiplication but still is differently focused.  
After applying the convolutions (Figure~\ref{fig:dci_after_heatmap}), irrelevant information is effectively filtered out. Attention to ``bedeviled'', ``has stalled'', ``disputes'', and ``trying'' is strongly amplified, while attention to other tokens is significantly suppressed.}

\textcolor{black}{These results show that depthwise convolution of CBI effectively filters noise and strengthens relevant cues, enabling more accurate temporal relation classification.
The theoretical analysis is found in Appendix~\ref{appendix:theory}. 
}

\begin{figure}[!t] 
\centering
\subfloat[\scriptsize{Event Interaction Map (Multiplication)}]{
  \includegraphics[width=\linewidth]{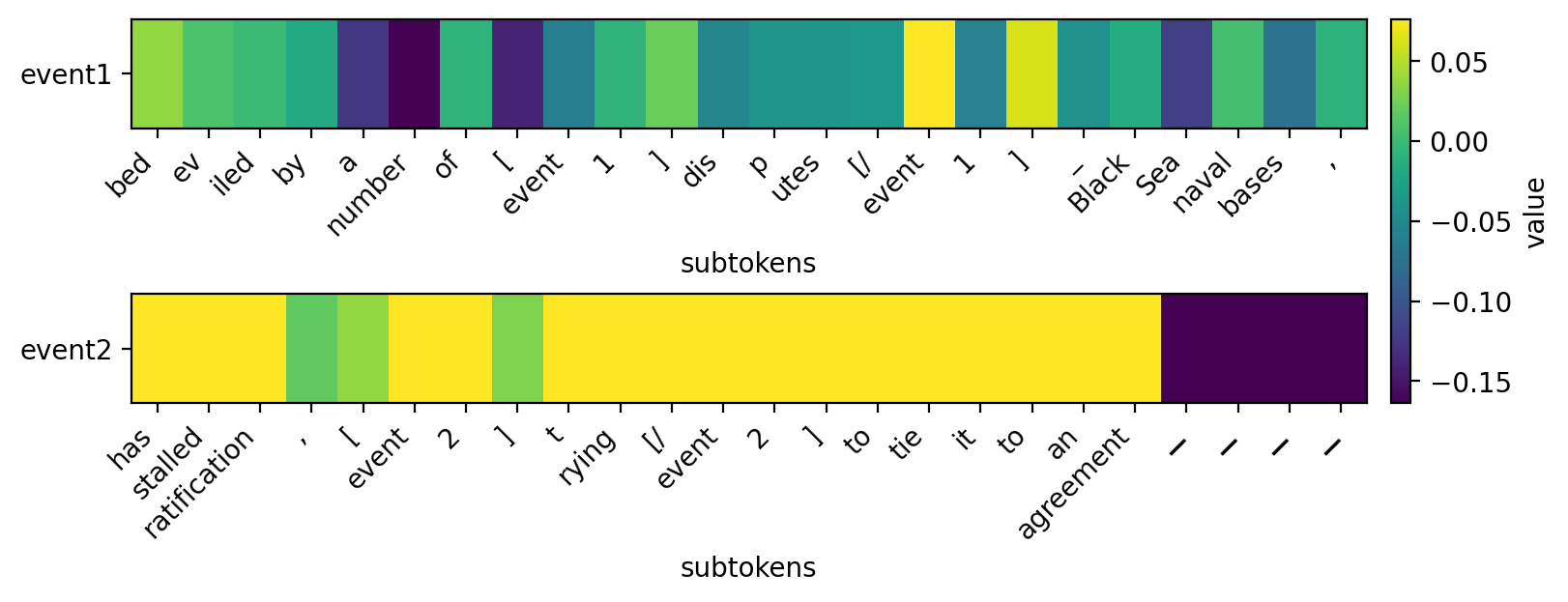}
  \label{fig:multi_heatmap}
}
\hfill
\subfloat[\scriptsize{Event Interaction Map (Addition)}]{
  \includegraphics[width=\linewidth]{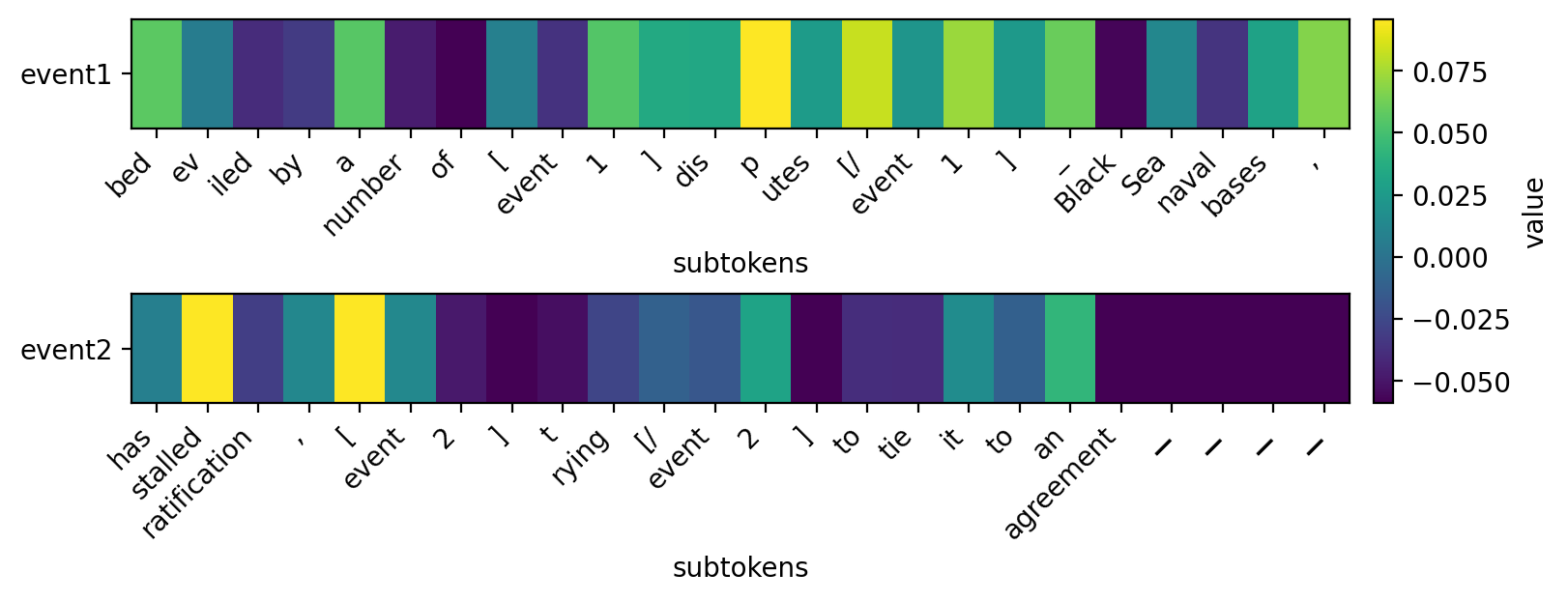}
  \label{fig:add_heatmap}
}

\medskip

\subfloat[\scriptsize{Event Interaction Map (Subtraction)}]{
  \includegraphics[width=\linewidth]{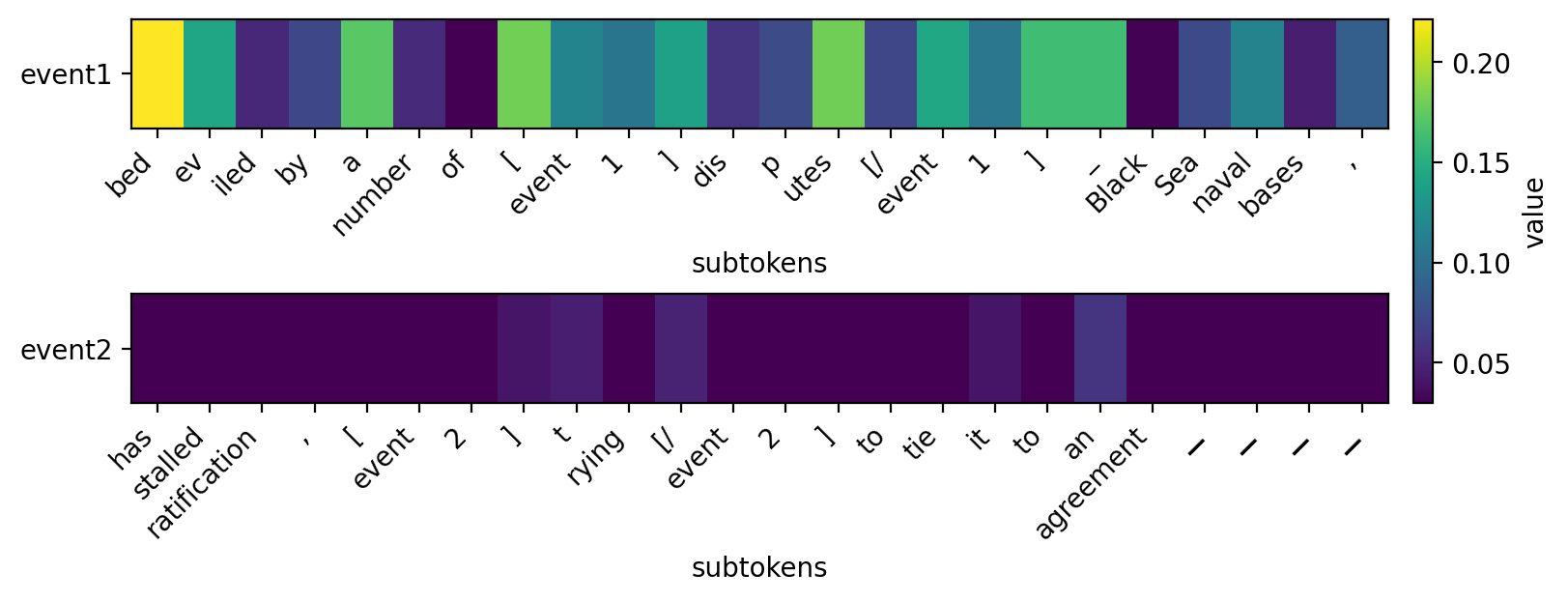}
  \label{fig:minus_heatmap}
}
\hfill
\subfloat[\scriptsize{Event Interaction Map (CBI before separate convolution)}]{
  \includegraphics[width=\linewidth]{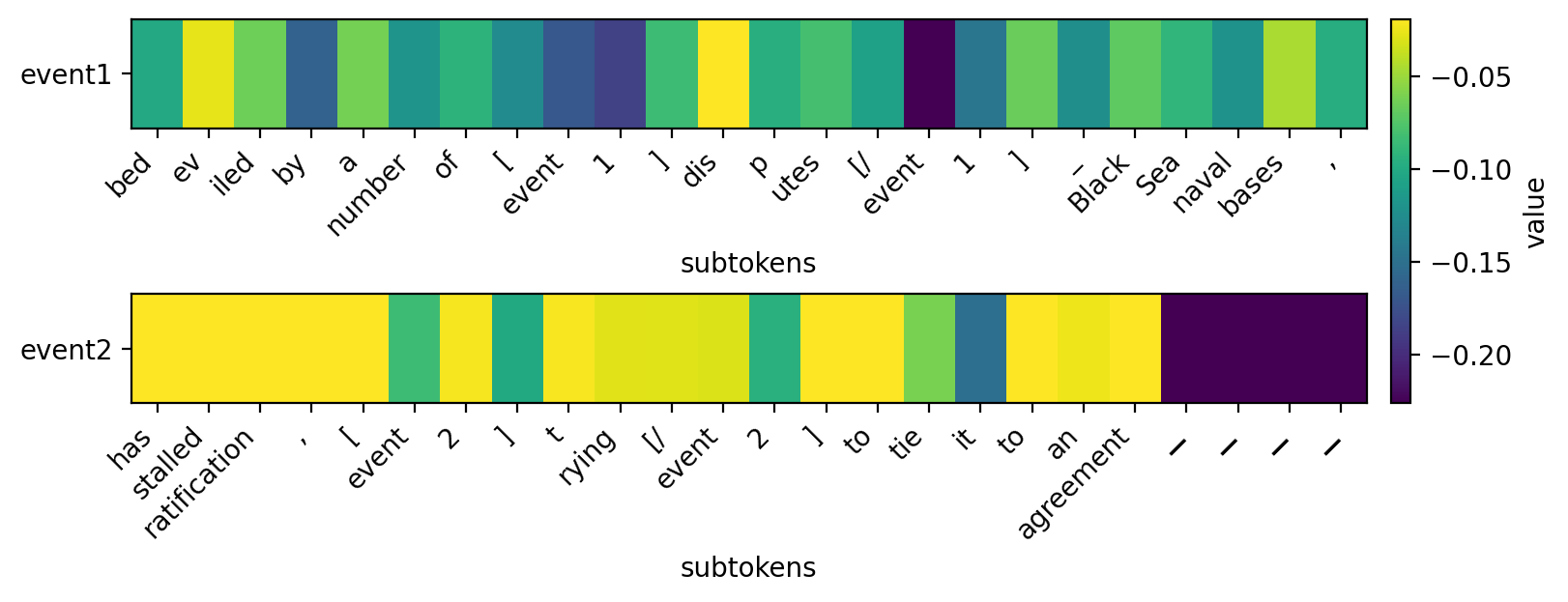}
  \label{fig:dci_before_heatmap}
}

\medskip

\subfloat[\scriptsize{Event Interaction Map (CBI after separate convolution)}]{
  \includegraphics[width=\linewidth]{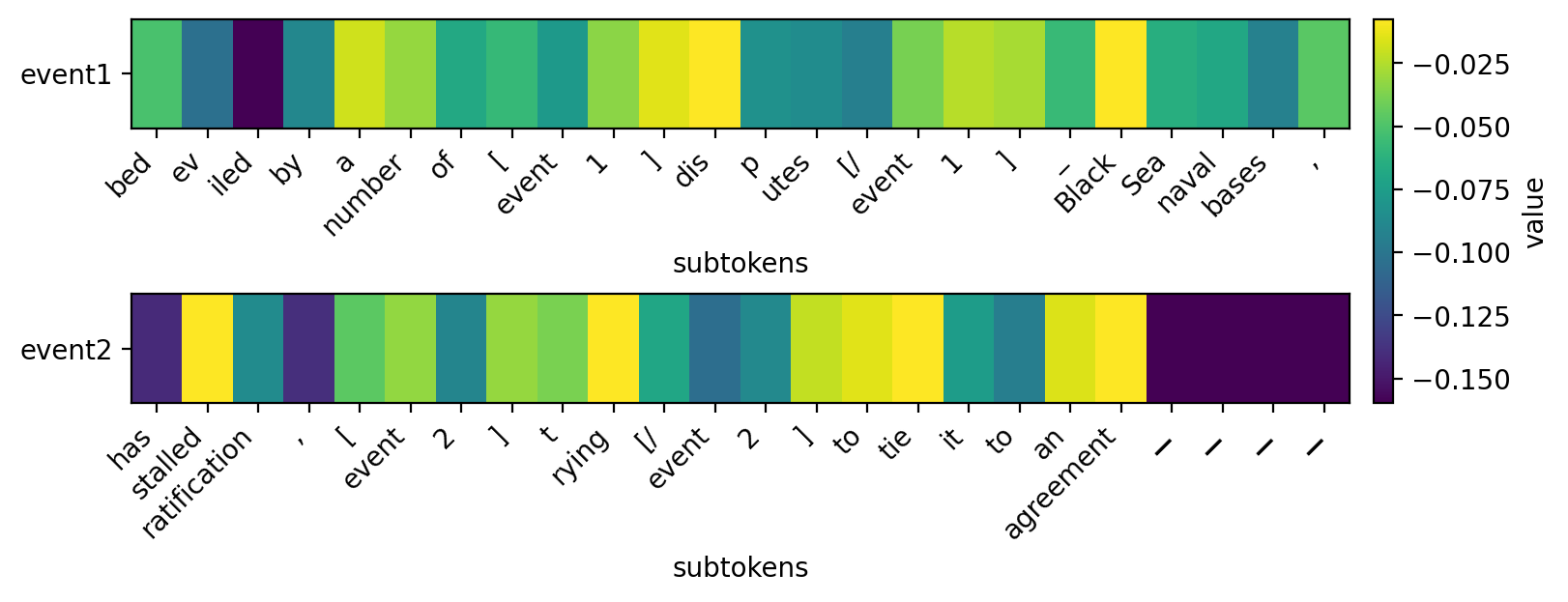}
  \label{fig:dci_after_heatmap}
}
\caption{Event interaction map for each event interaction method.}
\label{fig:heatmap_all}
\end{figure}

\begin{table*}[t]
\centering
\tiny
\begin{tabular}{llccccccc}
\toprule
\textbf{Computation Metrics} & \textbf{PEFT} & \textbf{Multiplication} & \textbf{Addition} & \textbf{CBI} & \textbf{Subtraction} & \textbf{Text Concatenation} & \textbf{Embedding Fusion} & \textbf{Full Fine-tuning} \\
\midrule
\multirow{2}{*}{\textbf{Memory Cost}} & LoRA    & 29269.37 & 29269.37 & 29271.05 & 29269.37 & 29271.40 & 29269.37 & \multirow{2}{*}{148792.17} \\
                             & Adapter & 29030.91 & 29030.91 & 29031.75 & 29030.91 & 29030.91 & 29030.91 & \\
\midrule
\multirow{2}{*}{\textbf{Time}}        & LoRA    & 223.84 & 216.71 & 221.12 & 217.16 & 375.81 & 214.07 & \multirow{2}{*}{3939.75} \\
                             & Adapter & 216.42 & 216.31 & 221.8  & 216.55 & 376.06 & 213.58  \\
\bottomrule
\end{tabular}
\caption{Computational analysis of LoRA and Adapter on Llama-3.1-8B, reporting GPU memory usage (MB) and runtime in terms of seconds (s) across six temporal relation extraction models.}
\label{tab:cost_time}
\end{table*}

\subsection{Sensitivity Analysis}

\begin{table}[t]
\centering
\tiny
\setlength{\tabcolsep}{2pt}
\begin{tabular}{c| c c c c c| c}
\toprule
\textbf{Kernel Size} & \textbf{MATRES} & \textbf{TB-Dense} & \textbf{TCR} & \textbf{TDD-Auto} & \textbf{TDD-Man} & \textbf{AVG} \\
\midrule
\multicolumn{7}{c}{\textbf{Adapter}} \\
\midrule
7 
& \cellcolor{lightgray}{\underline{67.90}}
& \cellcolor{lightgray}{\underline{56.11}}
& \cellcolor{lightgray}{\underline{63.89}}
& \cellcolor{lightgray}{\underline{88.80}}
& \cellcolor{lightgray}{\underline{71.43}}
& \cellcolor{lightblue}{\textbf{69.63}} \\
5 
& 64.84 
& 55.80
& 63.43
& \cellcolor{lightblue}{\textbf{89.20}} 
& \cellcolor{lightblue}{\textbf{72.98}} 
& 69.25 \\
3 
& \cellcolor{lightblue}{\textbf{66.13}} & \cellcolor{lightblue}{\textbf{58.31}} & \cellcolor{lightblue}{\textbf{65.58}} & 86.35 & 70.45 & \cellcolor{lightgray}{\underline{69.36}} \\
1 
& 62.34 & 54.86 & 62.96 & 78.75 & 61.31 & 64.04 \\
\midrule
\multicolumn{7}{c}{\textbf{LoRA}} \\
\midrule
7 
& 69.76
& 57.05
& 67.44
& \cellcolor{lightgray}{\underline{88.80}}
& \cellcolor{lightgray}{\underline{77.63}}
& 72.14 \\
5 
& \cellcolor{lightblue}{\textbf{71.05}} 
& 58.78
& \cellcolor{lightblue}{\textbf{69.44}} 
& \cellcolor{lightblue}{\textbf{89.35}} 
& \cellcolor{lightblue}{\textbf{77.88}} 
& \cellcolor{lightblue}{\textbf{73.30}} \\
3 
& \cellcolor{lightgray}{\underline{69.92}}
& \cellcolor{lightgray}{\underline{59.72}}
& \cellcolor{lightgray}{\underline{68.52}}
& 87.55 
& 76.57 
& \cellcolor{lightgray}{\underline{72.46}} \\
1 
& 69.52 & \cellcolor{lightblue}{\textbf{59.40}} & 66.36 & 84.65 & 72.16 & 70.42 \\
\bottomrule
\end{tabular}
\caption{
Influence of the convolutional kernel size for CBI under Adapter and LoRA settings. Kernel size 5 provides the most stable and competitive performance across all datasets. Best results are in \cellcolor{lightblue}{\textbf{bold blue}}, second-best in \cellcolor{lightgray}{\underline{gray underlined}}.}
\label{tab:ablation_kernel}
\end{table}

To assess the role of convolutional refinement within CBI, we conduct a sensitivity analysis on the TDD-man dataset with Llama 3.1 8B. 
We analyze how performance varies with different convolutional kernel sizes across five temporal relation extraction benchmarks.

We vary the convolutional kernel size from 1 to 7 to assess its influence on temporal relation modeling. As shown in Table \ref{tab:ablation_kernel}, extremely small kernels (e.g., size 1) significantly underperform in both Adapter and LoRA settings, indicating that minimal receptive fields provide insufficient temporal context. 
Kernel size 5 performs reliably across all five datasets, whereas kernel size 7 exhibits greater variance in average micro-F1. This stability is essential for cross-dataset generalization, and therefore we adopt kernel size 5 as the default configuration of CBI.

\subsection{Computational Analysis}

We compare several interaction-based designs in PEFT settings to assess their efficiency and effectiveness (Table~\ref{tab:cost_time}). Multiplication and CBI substantially reduce training time by more than 40\% compared to Text Concatenation while keeping GPU memory usage nearly unchanged. Text Concatenation processes concatenated event pairs through a low-rank bottleneck without explicit interactions, which leads to much longer training (over 370 seconds per epoch) and weaker performance. Other lightweight variants (Embedding Fusion, Subtraction, and Addition) achieve similar computational cost but capture fewer inter-event dependencies, resulting in slightly inferior accuracy. Building on Multiplication, CBI adds only two depthwise convolutional layers with small kernels, increasing parameters by less than 0.1\% and introducing negligible overhead. As shown in Table~\ref{tab:cost_time}, CBI matches the efficiency of Multiplication while consistently achieving the best performance, demonstrating that explicitly modeling event interactions within the low-rank bottleneck is both computationally efficient and crucial for effective temporal reasoning.

\section{Conclusion}
In this work, we analyzed why PEFT performs poorly on temporal relation extraction despite the strong general ability of large language models. We found that low-rank bottlenecks block cross-event interactions, preventing key cues such as discourse markers from being properly encoded.
To overcome this limitation, we proposed Convolutional Bottleneck Interaction (CBI), a simple architecture that injects event–event interaction directly within  the bottleneck. CBI first applies a lightweight depthwise convolution to enlarge the receptive field of each event and to reduce local noise, and then performs element-wise multiplication to model channel-wise co-activation between events. This design addresses the two main weaknesses of multiplicative interaction, token misalignment and noise amplification, while keeping the computational cost minimal.
Experiments on five datasets and seven LLM backbones with adapters and LoRA show that CBI delivers large and consistent gains, up to +31.7 micro-F1, with negligible overhead.
These results demonstrate that explicitly modeling event interaction within the low-rank space is both effective and necessary for temporal relation extraction in PEFT. 
More broadly, our findings suggest that relational tasks benefit from interaction-aware bottleneck designs, offering a promising direction for future parameter-efficient architectures.

\textcolor{black}{Beyond temporal relation extraction, the idea of modeling interactions within low-rank bottlenecks may benefit a wider range of relational reasoning tasks. Many NLP problems require capturing dependencies between events, entities, or discourse units, where effective interaction modeling is essential. Future work could explore extending interaction-aware bottleneck designs to tasks such as causal relation extraction, discourse relation classification, and document-level reasoning.}

\section*{Acknowledgements}
This research were funded by the AIDAVA (EU HORIZON-HLTH-2021-TOOL-06-03), the CHIST-ERA iTRUST project (CHIST-ERA Call 2021, FWO grant G0L0822N) and Flanders Artificial Intelligence Research program (FAIR).

The computations described in this research were performed using the The Flemish Supercomputer Center Tier-1 HPC service~(https://www.vscentrum.be/) and the high-performance computing resources provided by CINECA~(https://www.cineca.it/), Italy.

Finally, we would like to thank Minye Wu, as well as our action editor and anonymous reviewers, for their time, effort, and valuable feedback.

\bibliography{tacl2021}
\bibliographystyle{acl_natbib}

\section{Theoretical Analysis}
\label{appendix:theory}

\subsection{Motivation}

PEFT methods such as LoRA and Adapter
restrict learnable updates to a low-rank subspace of pretrained models.
Although efficient, this {ranking bottleneck} limits the ability of the model
to learn high-rank cross-event interactions required in temporal relation extraction.
In particular, when two event mentions are represented by latent sequences,
a full bilinear interaction between them is generally high-rank,
whereas PEFT updates cannot reliably parameterize such transformations.

To alleviate this deficiency, we propose Convolutional Bottleneck Interaction (CBI),
which introduces two inductive biases on the {feature map}:
(a) {Depthwise convolution} for local temporal alignment and noise suppression;
(b) {Element-wise multiplication} for explicit second-order event interaction.
We formalize why these operations restore relational expressivity
without increasing the PEFT rank.

\subsection{Preliminaries and Notation}

Let $\mathbf{H}_1,\mathbf{H}_2\in\mathbb{R}^{L\times d}$ be the contextual token
representations of two event spans produced by the backbone transformer,
where $L$ is the span length and $d$ the hidden size.
PEFT introduces a low-rank bottleneck via a projection matrix
$\mathbf{W}_{down}\in\mathbb{R}^{d\times r}$, $r\ll d$, yielding
\begin{equation}
    \mathbf{E}_i = \mathbf{H}_i\mathbf{W}_{down} \in \mathbb{R}^{L\times r},\quad i\in\{1,2\}.
\end{equation}
We denote by $\mathbf{e}_i(l)\in\mathbb{R}^r$ the $l$-th token vector of $\mathbf{E}_i$:
\begin{equation}
    \mathbf{e}_i(l) = \mathbf{E}_i[l,:].
\end{equation}

\paragraph{Ideal bilinear interaction:}
A general second-order cross-event interaction can be written as a bilinear form
\begin{multline}
    s^\star(\mathbf{E}_1,\mathbf{E}_2)
    = \left\langle \mathbf{W},\, \mathbf{E}_1^{\top}\mathbf{E}_2 \right\rangle \\
    = \sum_{c_1=1}^r \sum_{c_2=1}^r
      \mathbf{W}_{c_1,c_2}\,
      \left\langle \mathbf{E}_1[:,c_1], \mathbf{E}_2[:,c_2]\right\rangle,
\end{multline}
where $\mathbf{W}\in\mathbb{R}^{r\times r}$
is a task-specific bilinear weight matrix and
$\langle\cdot,\cdot\rangle$ denotes the standard inner product.
When $r$ is small due to PEFT, it is difficult to directly learn full $\mathbf{W}$
(or equivalently complete cross-channel mixing) is difficult.

\paragraph{CBI operations:}
CBI first applies a depthwise convolution along the token axis.
Let $\mathbf{T}\in\mathbb{R}^{L\times L}$ be the Toeplitz matrix
associated with a 1D depthwise kernel of width $k$.
For each event:
\begin{multline}\label{eq:conv_def_1}
    \tilde{\mathbf{E}}_i = \mathbf{T}\mathbf{E}_i \in\mathbb{R}^{L\times r},
    \quad\text{equivalently}\quad \\
    \tilde{\mathbf{e}}_i(l)=\sum_{j} \theta_j\,\mathbf{e}_i(l+j).
\end{multline}
Then CBI constructs the interaction feature field
\begin{multline}\label{eq:mul_def}
    \mathbf{F} = \tilde{\mathbf{E}}_1 \odot \tilde{\mathbf{E}}_2\in\mathbb{R}^{L\times r},\\
    \qquad
    \mathbf{F}[l,c] = \tilde{\mathbf{E}}_1[l,c]\tilde{\mathbf{E}}_2[l,c].
\end{multline}
Finally, any downstream output (e.g., pooling or attention-based aggregation)
maps $\mathbf{F}$ to a scalar score:
\begin{equation}\label{eq:readout_def}
    s_{\mathrm{CBI}}(\mathbf{E}_1,\mathbf{E}_2)
    = \langle \mathbf{p},\,\mathrm{vec}(\mathbf{F})\rangle,
\end{equation}
where $\mathrm{vec}(\mathbf{F})\in\mathbb{R}^{Lr}$ stacks all tokens and channels
into a vector, and $\mathbf{p}\in\mathbb{R}^{Lr}$ is a vector of learned linear weights.
We now show that $\mathbf{F}$ provides an explicit bilinear interaction
compatible with such outputs.

\subsection{Element-wise Multiplication as Explicit Diagonal Bilinear Interaction}

We first characterize the bilinear structure induced by $\mathbf{F}$.

\begin{lemma}[Diagonal bilinear identity]\label{lem:diag_identity}
For each channel $c\in\{1,\dots,r\}$,
\begin{equation}\label{eq:diag_identity}
    \sum_{l=1}^L \mathbf{F}[l,c]
    = \left(\tilde{\mathbf{E}}_1^{\top}\tilde{\mathbf{E}}_2\right)_{c,c}.
\end{equation}
\end{lemma}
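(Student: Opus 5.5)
This is a direct unfolding of definitions, and I will prove it by computing both sides entrywise. The plan is to start from the right-hand side of \eqref{eq:diag_identity}. I will write the $(c_1,c_2)$ entry of the $r\times r$ matrix $\tilde{\mathbf{E}}_1^{\top}\tilde{\mathbf{E}}_2$ using the definition of matrix multiplication:
\begin{equation*}
\left(\tilde{\mathbf{E}}_1^{\top}\tilde{\mathbf{E}}_2\right)_{c_1,c_2}
=\sum_{l=1}^{L}\left(\tilde{\mathbf{E}}_1^{\top}\right)_{c_1,l}\tilde{\mathbf{E}}_2[l,c_2]
=\sum_{l=1}^{L}\tilde{\mathbf{E}}_1[l,c_1]\,\tilde{\mathbf{E}}_2[l,c_2].
\end{equation*}
The second equality uses $(\mathbf{A}^{\top})_{c,l}=\mathbf{A}[l,c]$.

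Next I will specialize to the diagonal, setting $c_1=c_2=c$. Substituting the entrywise definition of $\mathbf{F}$ from \eqref{eq:mul_def}, namely $\mathbf{F}[l,c]=\tilde{\mathbf{E}}_1[l,c]\tilde{\mathbf{E}}_2[l,c]$, then identifies each summand with $\mathbf{F}[l,c]$. Summing over $l$ gives exactly the left-hand side. Equivalently, the diagonal entry is the inner product $\langle \tilde{\mathbf{E}}_1[:,c],\tilde{\mathbf{E}}_2[:,c]\rangle$. This links the identity to the bilinear form $s^\star$ with a diagonal weight matrix $\mathbf{W}$, which is the interpretation the paper intends.

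No step is a genuine obstacle. The only care needed is bookkeeping of shapes: $\tilde{\mathbf{E}}_i=\mathbf{T}\mathbf{E}_i\in\mathbb{R}^{L\times r}$, so the contraction runs over the token index $l$ and leaves an $r\times r$ matrix. In particular, the identity does not depend on the convolution $\mathbf{T}$ at all; it holds for any pair of $L\times r$ matrices. I will remark on this so that the lemma is seen as purely algebraic, with the convolution's role deferred to the later results on alignment and noise.
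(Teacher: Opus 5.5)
Your proposal is correct and follows the paper's proof: both expand the $(c,c)$ entry of $\tilde{\mathbf{E}}_1^{\top}\tilde{\mathbf{E}}_2$ using the definition of matrix multiplication and identify each summand with $\mathbf{F}[l,c]$ from \eqref{eq:mul_def}. Your remark that the identity holds for any pair of $L\times r$ matrices, independent of $\mathbf{T}$, is also accurate.
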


\noindent\textbf{Proof.}
Let channel $c$ be arbitrary.
By definition of matrix multiplication,
\begin{equation}
\left(\tilde{\mathbf{E}}_1^{\top}\tilde{\mathbf{E}}_2\right)_{c,c}
= \sum_{l=1}^L \tilde{\mathbf{E}}_1[l,c]\tilde{\mathbf{E}}_2[l,c].
\end{equation}
By the definition of $\mathbf{F}$ in \eqref{eq:mul_def},
$\tilde{\mathbf{E}}_1[l,c]\tilde{\mathbf{E}}_2[l,c]=\mathbf{F}[l,c]$.
This yields \eqref{eq:diag_identity}.

\paragraph{Consequence.}
Lemma~\ref{lem:diag_identity} shows that $\mathbf{F}$
is an explicit token-wise representation of diagonal bilinear interactions.
Thus, any linear output that aggregates tokens within each channel (a special case of
\eqref{eq:readout_def}) can directly realize a diagonal bilinear score
\begin{multline}\label{eq:diag_score}
    s_{\mathrm{diag}}(\mathbf{E}_1,\mathbf{E}_2)
    = \sum_{c=1}^r \mathbf{w}_c\,
      \left(\tilde{\mathbf{E}}_1^{\top}\tilde{\mathbf{E}}_2\right)_{c,c}\\
    = \sum_{c=1}^r \mathbf{w}_c \sum_{l=1}^L \mathbf{F}[l,c],
\end{multline}
for some $\mathbf{w}\in\mathbb{R}^{r}$.

\begin{lemma}[Linear recoverability]\label{lem:recover}
For any $\mathbf{w}\in\mathbb{R}^r$,
there exists $\mathbf{p}_{\mathbf{w}}\in\mathbb{R}^{Lr}$ such that
\begin{equation}
    s_{\mathrm{diag}}(\mathbf{E}_1,\mathbf{E}_2)
    = \langle \mathbf{p}_{\mathbf{w}},\,\mathrm{vec}(\mathbf{F})\rangle,
\end{equation}
\end{lemma}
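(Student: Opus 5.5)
The plan is to build $\mathbf{p}_{\mathbf{w}}$ explicitly and then check the identity using Lemma~\ref{lem:diag_identity}. First fix a vectorization convention. Take $\mathrm{vec}(\mathbf{F})$ to stack tokens and channels so that entry $\mathbf{F}[l,c]$ sits at index $\iota(l,c)=(l-1)r+c$ (row-major, token-by-token). Any other fixed bijection $\{1,\dots,L\}\times\{1,\dots,r\}\to\{1,\dots,Lr\}$ works the same way. Then define
\begin{equation*}
    (\mathbf{p}_{\mathbf{w}})_{\iota(l,c)} = \mathbf{w}_c,\qquad l=1,\dots,L,\; c=1,\dots,r.
\end{equation*}
This makes $\mathbf{p}_{\mathbf{w}}$ the vector obtained by tiling $\mathbf{w}$ over the $L$ token positions. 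Equivalently, $\mathbf{p}_{\mathbf{w}}=\mathrm{vec}(\mathbf{1}_L\mathbf{w}^{\top})$ under the same convention.

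Next, expand the inner product over the index set:
\begin{equation*}
    \langle \mathbf{p}_{\mathbf{w}},\mathrm{vec}(\mathbf{F})\rangle
    = \sum_{l=1}^L\sum_{c=1}^r \mathbf{w}_c\,\mathbf{F}[l,c]
    = \sum_{c=1}^r \mathbf{w}_c\sum_{l=1}^L \mathbf{F}[l,c].
\end{equation*}
Exchanging the finite sums is trivially allowed. Applying Lemma~\ref{lem:diag_identity} channel by channel turns the inner sum into $(\tilde{\mathbf{E}}_1^{\top}\tilde{\mathbf{E}}_2)_{c,c}$. The result is exactly the definition of $s_{\mathrm{diag}}$ in \eqref{eq:diag_score}, which proves the claim.

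There is no real obstacle. The only point needing care is stating the vectorization convention, so that $\mathbf{p}_{\mathbf{w}}$ is well defined and does not depend on $\mathbf{E}_1,\mathbf{E}_2$; it depends only on $\mathbf{w}$, $L$ and $r$. I would also remark that the map $\mathbf{w}\mapsto\mathbf{p}_{\mathbf{w}}$ is linear and injective. This shows that the readout family \eqref{eq:readout_def} contains every diagonal bilinear score as a special case.
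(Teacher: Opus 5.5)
Your proposal is correct and follows the paper's own proof: both tile $w_c$ across the $L$ entries of channel $c$ in $\mathrm{vec}(\mathbf{F})$, rewrite the inner product as $\sum_{c} w_c\sum_{l}\mathbf{F}[l,c]$, and identify this with $s_{\mathrm{diag}}$ via Lemma~\ref{lem:diag_identity} and \eqref{eq:diag_score}. The only difference is cosmetic: the paper fixes column-wise vectorization while you use row-major, and as you note, this choice does not matter.
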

\noindent where $\mathrm{vec}$ denotes column-wise vectorization.

\noindent\textbf{Proof.}
Construct $\mathbf{p}_{\mathbf{w}}$ by assigning weight $w_c$
to all $L$ entries corresponding to channel $c$ in $\mathrm{vec}(\mathbf{F})$,
and zero elsewhere.
Then
\[
\langle \mathbf{p}_{\mathbf{w}},\mathrm{vec}(\mathbf{F})\rangle
= \sum_{c=1}^r w_c \sum_{l=1}^L \mathbf{F}[l,c]
= s_{\mathrm{diag}}(\mathbf{E}_1,\mathbf{E}_2),
\]
using \eqref{eq:diag_score}.

\paragraph{Interpretation.}
Element-wise multiplication converts two first-order representations
into an explicit second-order interaction field.
Any later linear aggregation can pick up these diagonal bilinear signals,
whereas a PEFT-constrained backbone would otherwise need to implicitly
provide them through low-rank updates.

\subsection{Depthwise Convolution as Soft Alignment and Denoising}

We next show that depthwise convolution preserves diagonal interactions
up to a controlled error and improves their signal-to-noise ratio.

\paragraph{Signal-to-noise decomposition.}
Write each low-rank event sequence as
\begin{equation}
    \mathbf{E}_i = \mathbf{S}_i + \boldsymbol{\varepsilon}_i,
\end{equation}
where $\mathbf{S}_i$ is the latent semantic signal relevant to temporal relations,
and $\boldsymbol{\varepsilon}_i$ represents noise.

\begin{lemma}[Stability of diagonal interaction]\label{lem:stability}
\label{sec:multi_error}
Assume $\|\mathbf{T}^{\top}\mathbf{T}-\mathbf{I}\|_2\le\epsilon$.
Then for any $\mathbf{S}_1,\mathbf{S}_2\in\mathbb{R}^{L\times r}$,
\begin{multline}
\big\|\mathrm{diag}(\mathbf{S}_1^{\top}\mathbf{T}^{\top}\mathbf{T}\mathbf{S}_2)
      -\mathrm{diag}(\mathbf{S}_1^{\top}\mathbf{S}_2)\big\|_2 \\
\le \epsilon\,\|\mathbf{S}_1\|_F\,\|\mathbf{S}_2\|_F .
\end{multline}
\end{lemma}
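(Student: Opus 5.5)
Set $\mathbf{\Delta}:=\mathbf{T}^{\top}\mathbf{T}-\mathbf{I}\in\mathbb{R}^{L\times L}$. By linearity of $\mathrm{diag}$, the quantity to be bounded equals $\|\mathrm{diag}(\mathbf{S}_1^{\top}\mathbf{\Delta}\mathbf{S}_2)\|_2$. The plan is to handle this one channel at a time and then sum over channels.

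\textbf{Channel-wise bound.} Fix a channel $c$ and write $\mathbf{s}_{i,c}:=\mathbf{S}_i[:,c]\in\mathbb{R}^L$. The $c$-th entry of the diagonal is $\mathbf{s}_{1,c}^{\top}\mathbf{\Delta}\,\mathbf{s}_{2,c}$. Cauchy--Schwarz together with the definition of the operator norm give
\[
\big|\mathbf{s}_{1,c}^{\top}\mathbf{\Delta}\,\mathbf{s}_{2,c}\big|\le \|\mathbf{\Delta}\|_2\,\|\mathbf{s}_{1,c}\|_2\,\|\mathbf{s}_{2,c}\|_2\le \epsilon\,\|\mathbf{s}_{1,c}\|_2\,\|\mathbf{s}_{2,c}\|_2 .
\]

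\textbf{Aggregation over channels.} Squaring and summing over $c$ gives
\[
\|\mathrm{diag}(\cdot)\|_2^2\le \epsilon^2\sum_{c=1}^r \|\mathbf{s}_{1,c}\|_2^2\,\|\mathbf{s}_{2,c}\|_2^2 .
\]
Each summand satisfies $\|\mathbf{s}_{1,c}\|_2^2\|\mathbf{s}_{2,c}\|_2^2 \le \|\mathbf{s}_{1,c}\|_2^2\,\|\mathbf{S}_2\|_F^2$, since every column norm is at most the Frobenius norm. Summing the first factor over $c$ then gives $\|\mathbf{S}_1\|_F^2$. So the sum is at most $\|\mathbf{S}_1\|_F^2\|\mathbf{S}_2\|_F^2$, and taking square roots yields the claim. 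An alternative route uses $\sum_c a_c^2b_c^2\le(\sum_c a_c^2)(\sum_c b_c^2)$ for nonnegative $a_c,b_c$, which gives the same bound directly.

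\textbf{Main obstacle.} There is no hard step. The only point needing care is aggregating over channels: summing the entries in $\ell_1$ instead of using $\ell_2$ would still reach the same bound through Cauchy--Schwarz over $c$. One could also bound via $\|\mathbf{S}_1^{\top}\mathbf{\Delta}\mathbf{S}_2\|_F\le\|\mathbf{S}_1\|_2\|\mathbf{\Delta}\|_2\|\mathbf{S}_2\|_F$ and then $\|\mathrm{diag}(\mathbf{M})\|_2\le\|\mathbf{M}\|_F$. This is even slightly sharper, and it implies the stated bound because $\|\mathbf{S}_1\|_2\le\|\mathbf{S}_1\|_F$.
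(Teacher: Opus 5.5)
Your proof is correct, and your main route differs mildly from the paper's.

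The paper does not work channel by channel. After writing the difference as $\mathrm{diag}(\mathbf{S}_1^{\top}\mathbf{\Delta}\mathbf{S}_2)$, it applies $\|\mathrm{diag}(\mathbf{A})\|_2\le\|\mathbf{A}\|_F$ to the whole $r\times r$ matrix. It then asserts $\|\mathbf{S}_1^{\top}\mathbf{\Delta}\mathbf{S}_2\|_F\le\|\mathbf{\Delta}\|_2\|\mathbf{S}_1\|_F\|\mathbf{S}_2\|_F$ without justification. That is exactly the alternative in your last paragraph, where you also supply the missing intermediate step through $\|\mathbf{S}_1\|_2$.

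Your channel-wise argument instead bounds only the $r$ diagonal bilinear forms $\mathbf{s}_{1,c}^{\top}\mathbf{\Delta}\mathbf{s}_{2,c}$. It never brings in the off-diagonal entries that the Frobenius route includes and then effectively wastes. As a result, its intermediate bound $\epsilon\bigl(\sum_c\|\mathbf{s}_{1,c}\|_2^2\|\mathbf{s}_{2,c}\|_2^2\bigr)^{1/2}$ is at least as tight as $\epsilon\|\mathbf{S}_1\|_2\|\mathbf{S}_2\|_F$, because every column norm of $\mathbf{S}_1$ is at most $\|\mathbf{S}_1\|_2$. So your remark that the operator-norm alternative is ``even slightly sharper'' holds only relative to the stated bound, not relative to your own channel-wise estimate.

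The paper's version is shorter and relies on standard matrix-norm inequalities. Yours is fully elementary (Cauchy--Schwarz per channel) and makes explicit that the lemma only concerns matched-channel interactions, which is the diagonal structure CBI uses.
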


\noindent\textbf{Proof.}
\[
\begin{aligned}
&\mathrm{diag}(\mathbf{S}_1^{\top}\mathbf{T}^{\top}\mathbf{T}\mathbf{S}_2)
 -\mathrm{diag}(\mathbf{S}_1^{\top}\mathbf{S}_2) \\
&=\mathrm{diag}\big(\mathbf{S}_1^{\top}(\mathbf{T}^{\top}\mathbf{T}-\mathbf{I})\mathbf{S}_2\big).
\end{aligned}
\]
Taking norms and using $\|\mathrm{diag}(\mathbf{A})\|_2\le\|\mathbf{A}\|_F$,
\[
\begin{aligned}
\|\cdots\|_2
&\le \|\mathbf{S}_1^{\top}(\mathbf{T}^{\top}\mathbf{T}-\mathbf{I})\mathbf{S}_2\|_F \\
&\le \|\mathbf{T}^{\top}\mathbf{T}-\mathbf{I}\|_2\,\|\mathbf{S}_1\|_F\,\|\mathbf{S}_2\|_F\\
&\le \epsilon\,\|\mathbf{S}_1\|_F\,\|\mathbf{S}_2\|_F.
\end{aligned}
\]

\paragraph{Meaning.}
Lemma~\ref{lem:stability} formalizes soft alignment:
a low-pass depthwise kernel makes $\mathbf{T}^{\top}\mathbf{T}$ close to $\mathbf{I}$
on smooth signals, so diagonal bilinear interactions are approximately preserved.

\begin{lemma}[Noise contraction and SNR improvement]\label{lem:snr}
\label{sec:cbi_solve_error}
Assume $\|\mathbf{T}\boldsymbol{\varepsilon}_i\|_2^2
      \le \alpha\|\boldsymbol{\varepsilon}_i\|_2^2$ for some $0<\alpha<1$.
Then the expected diagonal noise interaction energy satisfies
\begin{multline}
\mathbb{E}\big\|\mathrm{diag}((\mathbf{T}\boldsymbol{\varepsilon}_1)^{\top}
(\mathbf{T}\boldsymbol{\varepsilon}_2))\big\|_2^2
\le \\ 
\alpha^2\,
\mathbb{E}\big\|\mathrm{diag}(\boldsymbol{\varepsilon}_1^{\top}
\boldsymbol{\varepsilon}_2)\big\|_2^2 .
\end{multline}
Consequently, the signal-to-noise ratio
of diagonal interactions after convolution is improved by at least $1/\alpha^2$.
\end{lemma}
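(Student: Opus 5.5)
The plan is to reduce the claim, channel by channel, to a statement about the quadratic form $\mathbf{a}^{\top}\mathbf{M}\mathbf{b}$, where $\mathbf{M}:=\mathbf{T}^{\top}\mathbf{T}$ and $\mathbf{a},\mathbf{b}$ are the $c$-th columns of $\boldsymbol{\varepsilon}_1,\boldsymbol{\varepsilon}_2$. By definition of matrix multiplication, as in the proof of Lemma~\ref{lem:diag_identity},
\begin{equation*}
\big\|\mathrm{diag}((\mathbf{T}\boldsymbol{\varepsilon}_1)^{\top}(\mathbf{T}\boldsymbol{\varepsilon}_2))\big\|_2^2=\sum_{c=1}^r\big(\boldsymbol{\varepsilon}_1[:,c]^{\top}\mathbf{M}\,\boldsymbol{\varepsilon}_2[:,c]\big)^2 .
\end{equation*}
The same identity with $\mathbf{M}=\mathbf{I}$ gives the right-hand side. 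So it suffices to compare $\mathbb{E}(\mathbf{a}^{\top}\mathbf{M}\mathbf{b})^2$ with $\mathbb{E}(\mathbf{a}^{\top}\mathbf{b})^2$ for each $c$, and then sum over $c$.

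A purely pathwise bound cannot work. The right-hand side $\mathrm{diag}(\boldsymbol{\varepsilon}_1^{\top}\boldsymbol{\varepsilon}_2)$ can vanish, for example when the noise columns are orthogonal, while the left-hand side does not. This is why the statement is phrased in expectation. I will make explicit the noise model that this implicitly requires: $\boldsymbol{\varepsilon}_1,\boldsymbol{\varepsilon}_2$ are independent and zero-mean, and their columns are white along the token axis, i.e.\ $\mathbb{E}[\boldsymbol{\varepsilon}_i[:,c]\boldsymbol{\varepsilon}_i[:,c]^{\top}]=\sigma_{i,c}^2\mathbf{I}_L$. I will read the contraction hypothesis $\|\mathbf{T}\boldsymbol{\varepsilon}\|_2^2\le\alpha\|\boldsymbol{\varepsilon}\|_2^2$ as holding for every realization of the noise, that is, on all of $\mathbb{R}^{L}$ columnwise. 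It therefore yields the operator bound $\|\mathbf{M}\|_2=\|\mathbf{T}\|_2^2\le\alpha$.

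Under this model, condition on $\mathbf{b}$ and use independence. This gives
\begin{equation*}
\mathbb{E}(\mathbf{a}^{\top}\mathbf{M}\mathbf{b})^2=\sigma_{1,c}^2\,\mathbb{E}\|\mathbf{M}\mathbf{b}\|_2^2=\sigma_{1,c}^2\sigma_{2,c}^2\,\mathrm{tr}(\mathbf{M}^2),
\end{equation*}
and likewise $\mathbb{E}(\mathbf{a}^{\top}\mathbf{b})^2=\sigma_{1,c}^2\sigma_{2,c}^2L$. Since $\mathbf{M}$ is positive semidefinite with all eigenvalues at most $\alpha$, we get $\mathrm{tr}(\mathbf{M}^2)\le\alpha^2L$. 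Summing over channels then gives the claimed inequality with factor $\alpha^2$.

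For the ``consequently'' clause, I will pair this noise bound with Lemma~\ref{lem:stability}. That lemma shows the signal term $\mathrm{diag}(\mathbf{S}_1^{\top}\mathbf{M}\mathbf{S}_2)$ changes by at most $\epsilon\|\mathbf{S}_1\|_F\|\mathbf{S}_2\|_F$. Thus, for smooth signals (the regime $\epsilon\to0$), the signal energy is preserved while the noise energy shrinks by $\alpha^2$, so the ratio improves by a factor of at least about $1/\alpha^2$. I will state this factor as $(1-\epsilon')^2/\alpha^2$, with $\epsilon'$ the relative signal distortion, which reduces to $1/\alpha^2$ when $\mathbf{T}$ acts isometrically on the signal.

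The main obstacle is that the lemma as literally stated is underdetermined. Without whiteness, a general noise covariance $\boldsymbol{\Sigma}_i$ gives $\mathrm{tr}(\mathbf{M}\boldsymbol{\Sigma}_1\mathbf{M}\boldsymbol{\Sigma}_2)$, which need not be at most $\alpha^2\,\mathrm{tr}(\boldsymbol{\Sigma}_1\boldsymbol{\Sigma}_2)$. A natural first try is to replace whiteness by the assumption that $\mathbf{M}$ commutes with the covariances, e.g.\ stationary noise with a circulant approximation of $\mathbf{T}$. In that case the eigenbasis argument goes through verbatim.

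There is also a tension to acknowledge explicitly. A contraction on all of $\mathbb{R}^L$ conflicts with the near-isometry assumed in Lemma~\ref{lem:stability}. I will therefore restrict the two hypotheses to complementary subspaces: high-frequency modes for the noise and low-frequency modes for the signal. The whiteness computation is then carried out with the noise supported in the high-frequency subspace.
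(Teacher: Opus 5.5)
Your argument is correct under the hypotheses you make explicit. It is best read as a rigorous replacement for the paper's proof, which is only a two-sentence heuristic: each noise sequence loses a factor $\alpha$ of energy under $\mathbf{T}$, and the diagonal interaction's second moment ``scales multiplicatively,'' giving $\alpha^2$. The paper's version buys brevity; yours buys an actual proof.

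Your extra assumptions are genuinely needed. Take $L=2$, $r=1$, $\mathbf{T}=\tfrac12\mathbf{1}\mathbf{1}^{\top}$ (kernel $\theta_{-1}=\theta_0=\theta_1=\tfrac12$), and $\boldsymbol{\varepsilon}_1=s_1\mathbf{e}_1$, $\boldsymbol{\varepsilon}_2=s_2\mathbf{e}_2$ with independent random signs. The noise is zero-mean and independent, and the contraction holds with $\alpha=\tfrac12$ for every realization. Yet the left side is $\tfrac14$ and the right side is $0$. So isotropy is exactly what makes the multiplicative scaling true, via $\mathbb{E}(\mathbf{a}^{\top}\mathbf{M}\mathbf{b})^2=\sigma_{1,c}^2\sigma_{2,c}^2\,\mathrm{tr}(\mathbf{M}^2)$.

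Your realization-wise reading of the hypothesis also matters. An averaged contraction $\mathrm{tr}(\mathbf{M})\le\alpha L$ does not give $\mathrm{tr}(\mathbf{M}^2)\le\alpha^2L$. Zero mean, by contrast, is never used.

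For the ``consequently'' clause, the paper tacitly assumes that $\mathbf{T}$ preserves the signal's diagonal energy. Under the global reading the clause is false. The kernel with only $\theta_0=\sqrt{\alpha}$ nonzero, i.e.\ $\mathbf{T}=\sqrt{\alpha}\,\mathbf{I}$, satisfies the hypothesis but scales signal and noise interactions by the same $\alpha^2$. This is the concrete form of the tension with Lemma~\ref{lem:stability} that you identify, so your factor $(1-\epsilon')^2/\alpha^2$ with complementary subspaces is the right repair.

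In that repair you can drop the commutation assumption. Suppose both noises have channel covariance $\sigma_{i,c}^2\mathbf{P}$ for a common projector $\mathbf{P}$, and the contraction holds on its range, i.e.\ $\mathbf{0}\preceq\mathbf{P}\mathbf{M}\mathbf{P}\preceq\alpha\mathbf{P}$. Then $\mathrm{tr}(\mathbf{M}\mathbf{P}\mathbf{M}\mathbf{P})=\mathrm{tr}\big((\mathbf{P}\mathbf{M}\mathbf{P})^2\big)\le\alpha^2\,\mathrm{tr}(\mathbf{P})$. This is the comparison you need, and it holds for any Toeplitz $\mathbf{T}$, not only circulant approximations.
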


\noindent\textbf{Proof.}
Under the assumption,
the energy of each noise sequence 
is reduced by a factor $\alpha$ after applying $\mathbf{T}$.
Since diagonal interaction is a sum of products of corresponding noise entries,
its second-moment energy scales multiplicatively, giving a factor $\alpha^2$.

\subsection{Effective Second-order Expressivity under PEFT Constraints}

PEFT constrains parameter updates to rank-$r$ forms, e.g.,
\begin{equation}
\Delta\mathbf{W}=\mathbf{W}_{\text{down}}\mathbf{W}_{\text{up}},
\end{equation}
where $\mathbf{W}_{\text{down}}\in\mathbb{R}^{d\times r}$ is the down-projection and  
$\mathbf{W}_{\text{up}}\in\mathbb{R}^{r\times d}$ is the up-projection, with $r\ll d$. 
This limits the ability of the backbone to create high-rank bilinear interactions
between two events through parameter adaptation alone.

CBI instead produces second-order interactions in the feature space:
\begin{multline}
\boldsymbol{\Phi}_{\mathrm{CBI}}(\mathbf{H}_1,\mathbf{H}_2)
=\\
(\mathbf{T}\mathbf{H}_1\mathbf{W}_{\text{down}})
\odot
(\mathbf{T}\mathbf{H}_2\mathbf{W}_{\text{down}})
=
\mathbf{F}.
\end{multline}
By Lemma~\ref{lem:diag_identity}, $\mathbf{F}$ explicitly encodes
diagonal bilinear interactions,
and by Lemma~\ref{lem:recover}, any linear output can exploit them directly.
Lemmas~\ref{lem:stability}--\ref{lem:snr} further show that depthwise convolution
stabilizes these interactions under token misalignment and improves their signal-to-noise ratio.
Therefore, CBI restores second-order relational expressivity
while keeping the PEFT rank unchanged and adding negligible parameters.

\subsection{Generalizing CBI to Graph Neural Network (GNN)–Based Temporal Relation Extraction Models}

\begin{table}[t]
\centering
\tiny
\renewcommand{\arraystretch}{1}
\setlength{\tabcolsep}{1pt}
\begin{tabular}{l| c c c c c| c}
\toprule
\textbf{Methods} & \textbf{MATRES} & \textbf{TB-Dense} & \textbf{TCR} & \textbf{TDD-Auto} & \textbf{TDD-Man} & \textbf{AVG} \\
\midrule
\multicolumn{7}{c}{\textbf{Adapter}} \\
\midrule
Text Concatenation 
& \cellcolor{lightgray}{\underline{60.25}} 
& \cellcolor{lightgray}{\underline{51.25}} 
& \cellcolor{lightgray}{\underline{61.49}} 
& \cellcolor{lightgray}{\underline{65.77}} 
& \cellcolor{lightgray}{\underline{64.41}} 
& \cellcolor{lightgray}{\underline{60.63}} \\
Embedding Fusion 
& 57.70 & 43.13 & 59.76 & 48.31 & 37.20 & 49.22 \\
\textbf{CBI} 
& \cellcolor{lightblue}{\textbf{66.61}} 
& \cellcolor{lightblue}{\textbf{56.23}} 
& \cellcolor{lightblue}{\textbf{65.15}} 
& \cellcolor{lightblue}{\textbf{90.02}} 
& \cellcolor{lightblue}{\textbf{75.99}} 
& \cellcolor{lightblue}{\textbf{70.80}} \\
\midrule
\multicolumn{7}{c}{\textbf{LoRA}} \\
\midrule
Text Concatenation 
& \cellcolor{lightgray}{\underline{66.24}} 
& \cellcolor{lightblue}{\textbf{59.41}} 
& \cellcolor{lightgray}{\underline{63.04}} 
& \cellcolor{lightgray}{\underline{72.02}} 
& \cellcolor{lightgray}{\underline{64.50}} 
& \cellcolor{lightgray}{\underline{65.04}} \\
Embedding Fusion 
& 56.20 & 41.97 & 59.76 & 36.74 & 36.73 & 46.28 \\
\textbf{CBI} 
& \cellcolor{lightblue}{\textbf{72.37}} 
& \cellcolor{lightgray}{\underline{58.93}} 
& \cellcolor{lightblue}{\textbf{68.36}} 
& \cellcolor{lightblue}{\textbf{90.09}} 
& \cellcolor{lightblue}{\textbf{78.79}} 
& \cellcolor{lightblue}{\textbf{73.71}} \\
\bottomrule
\end{tabular}
\caption{Performance comparison of Convolutional Bottleneck Interaction (CBI) with two baselines under Adapter and LoRA settings augmented with inference over a global document-level event graph applied on five temporal relation extraction datasets (micro-F1\%) using the \textbf{Llama 3.1 8B} backbone. Best results are in \cellcolor{lightblue}{\textbf{bold blue}}, second-best in \cellcolor{lightgray}{\underline{gray underlined}}.}
\label{tab:main_results_llama3_slim}
\end{table}

We also investigate whether our interaction methods generalize to GNN-based temporal relation extraction, focusing on GDLLM~\citep{zhao2025gdllm}, a recent state-of-the-art model.
GDLLM first uses an LLM to encode all event mentions in a document. It then builds a global event graph, where nodes are events, and edges reflect their discourse distance. A distance-aware GNN propagates information across this graph, helping the model reason about long-range temporal dependencies.

To test our approach in this setting, we replace the GDLLM event-pair encoder with three interaction strategies: Text Concatenation, Embedding Fusion, and our CBI module in both Adapter and LoRA configurations.
The results in Table~\ref{tab:main_results_llama3_slim} show three clear trends.
First, Text Concatenation performs reasonably well, but the improvements remain limited. Even with a strong GNN on top, simply merging two event representations does not provide enough direct interaction between events.
Second, Embedding Fusion performs the worst in all settings. This confirms that shallow element-wise operations are easily affected by the low-rank constraints of PEFT modules, causing important temporal cues to be lost before the GNN can use them.
Third, CBI achieves the best results in all datasets and all PEFT configurations. 
These improvements suggest that CBI provides cleaner and more informative pair representations, which the GNN can then integrate more effectively during global message passing.

Overall, even after adding a GNN module, our method remains the strongest interaction mechanism. 
This shows that better event interaction inside the PEFT bottleneck is crucial, and that CBI offers an effective and lightweight solution in both transformer-only and GNN-augmented settings.




\section{Dataset statistics}
\label{sec:dataset_stats}
We conduct experiments on five widely used datasets for temporal relation extraction, whose statistics are provided in Table~\ref{tab:dataset_stats}.

\begin{table}[ht]
\centering
\begin{tabular}{lccc}
\toprule
\textbf{Dataset} & \textbf{Train} & \textbf{Dev} & \textbf{Test} \\
\midrule
MATRES    & 3{,}715  & 1{,}240 & 1{,}238 \\
TB-Dense  & 1{,}910  & 638     & 636     \\
TCR       & 1{,}939  & 648     & 646     \\
TDD-Auto  & 22{,}981 & 7{,}660 & 7{,}661 \\
TDD-Man   & 3{,}673  & 1{,}225 & 1{,}224 \\
\bottomrule
\end{tabular}
\caption{Statistics of the five temporal relation extraction datasets in terms of number of temporal relation examples.}
\label{tab:dataset_stats}
\end{table}

\section{Experimental Analysis}
\subsection{Results of Varying Bottleneck Capacity}
\textcolor{black}{Table~\ref{tab:bottleneck_parallel} shows that increasing the bottleneck rank $r$ brings only small improvements. For example, when $r$ increases from 128 to 512, Text Concatenation improves by about 3 points in both Adapter and LoRA settings. Embedding Fusion shows similarly limited gains. However, even when $r$ is larger, these methods still perform much worse than CBI. In contrast, CBI achieves clearly better results without increasing the rank. It indicates that performance improvement of CBI does not come from a larger parameter capacity, but from a more effective event interaction mechanism.}
\begin{table}[t]
\centering
\tiny
\renewcommand{\arraystretch}{1}
\setlength{\tabcolsep}{3pt}
\begin{tabular}{l|ccc|ccc}
\toprule
\textbf{Methods} 
& \multicolumn{3}{c|}{\textbf{Adapter (rank $r$)}} 
& \multicolumn{3}{c}{\textbf{LoRA (rank $r$)}} \\
& $r$=128 & $r$=256 & $r$=512 & $r$=128 & $r$=256 & $r$=512 \\
\midrule
Text Concatenation 
& 60.06 & \cellcolor{lightgray}{\underline{61.87}} & \cellcolor{lightgray}{\underline{63.36}} 
& 64.94 & \cellcolor{lightgray}{\underline{65.92}} & \cellcolor{lightgray}{\underline{66.32}} \\

Embedding Fusion 
& 42.69 & 43.37 & 44.14 
& 41.61 & 42.78 & 43.94 \\

\textbf{CBI} 
& \cellcolor{lightblue}{\textbf{69.25}} & \cellcolor{lightblue}{\textbf{70.31}} & \cellcolor{lightblue}{\textbf{71.34}} 
& \cellcolor{lightblue}{\textbf{73.30}} & \cellcolor{lightblue}{\textbf{73.94}} & \cellcolor{lightblue}{\textbf{74.71}} \\
\bottomrule
\end{tabular}
\caption{\textcolor{black}{Performance comparison of CBI results with results baseline methods obtained with different bottleneck ranks $r$ under Adapter and LoRA settings (the averaged micro-F1 score across all five benchmarks). Best results are in \cellcolor{lightblue}{\textbf{bold blue}}, second-best in \cellcolor{lightgray}{\underline{gray underlined}}.}}
\label{tab:bottleneck_parallel}
\end{table}

\subsection{Error Analysis}

\begin{figure*}[ht]
\centering
\subfloat[\tiny{Full Fine-tuning.}]{\includegraphics[width=0.65\textwidth]{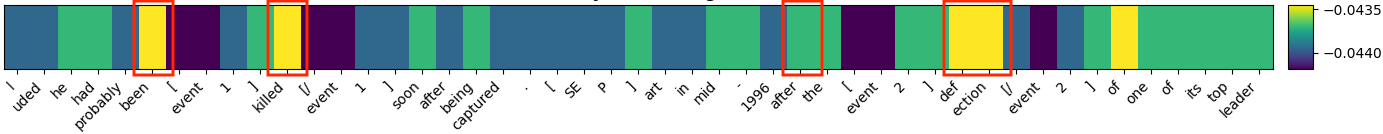}
\label{fig:full_ft_sample_11}
}
\hfil
\subfloat[\tiny{Text Concatenation with Adapter fine-tuning.}]{\includegraphics[width=0.65\textwidth]{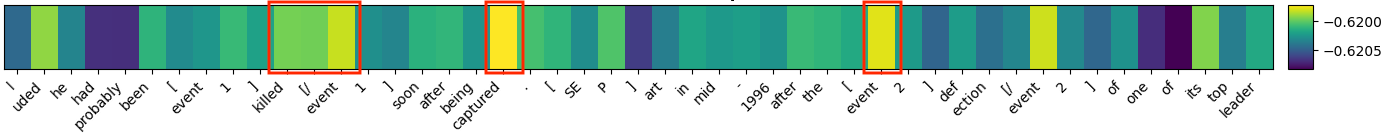}
\label{fig:text_concatenation_adapter_11}
}
\hfil
\subfloat[\tiny{Text Concatenation with LoRA fine-tuning.}]{\includegraphics[width=0.65\textwidth]{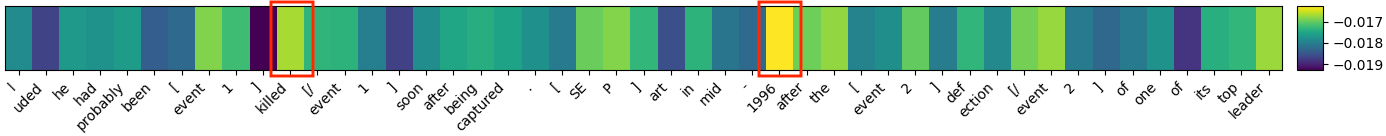}
\label{fig:text_concatenation_lora_11}
}
\hfil
\subfloat[\tiny{Embedding Fusion with Adapter fine-tuning.}]{\includegraphics[width=0.4\textwidth]{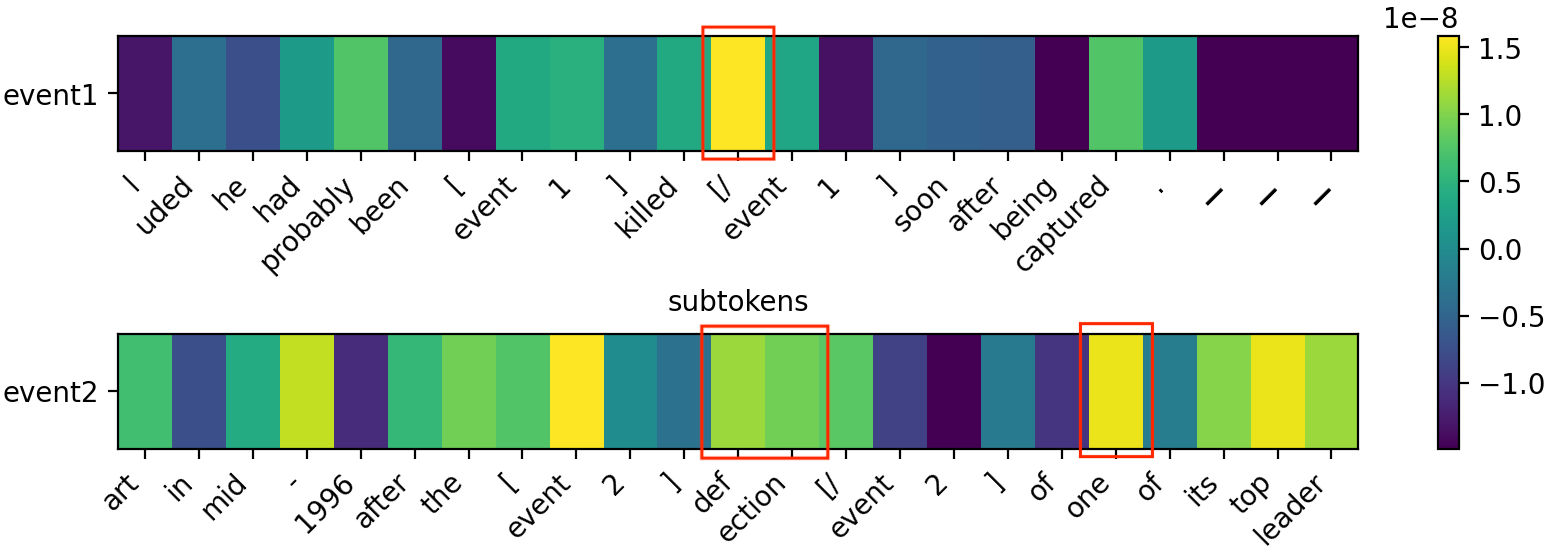}
\label{fig:embedding_adapter_11}
}
\hfil
\subfloat[\tiny{Embedding Fusion with Lora fine-tuning.}]{\includegraphics[width=0.4\textwidth]{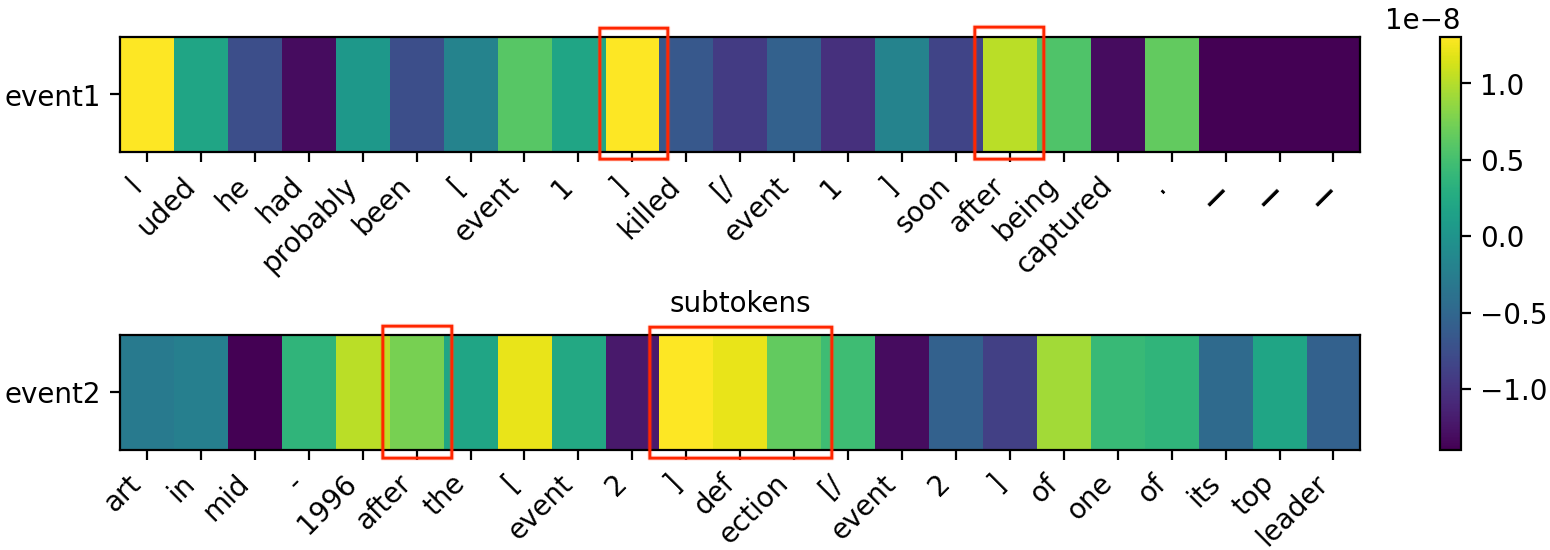}
\label{fig:embedding_lora_11}
}
\hfil
\subfloat[\tiny{CBI with Adapter fine-tuning.}]{\includegraphics[width=0.4\textwidth]{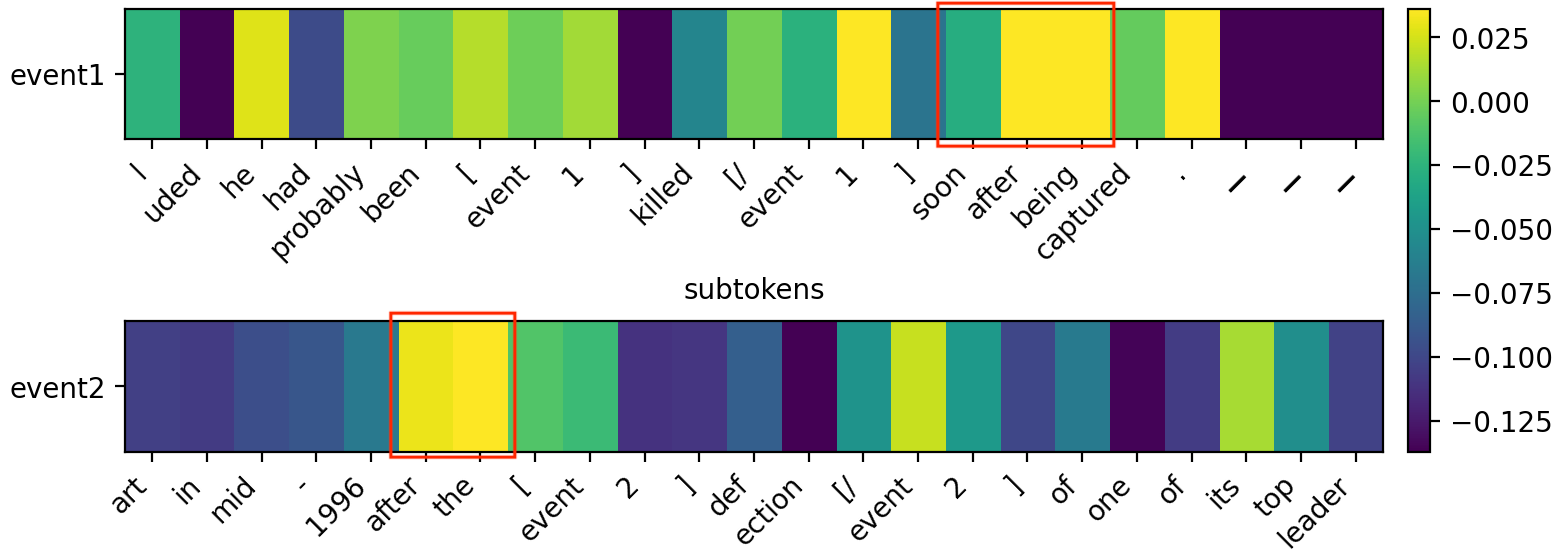}
\label{fig:cbi_adapter_11}
}
\hfil
\subfloat[\tiny{CBI with LoRA fine-tuning.}]{\includegraphics[width=0.4\textwidth]{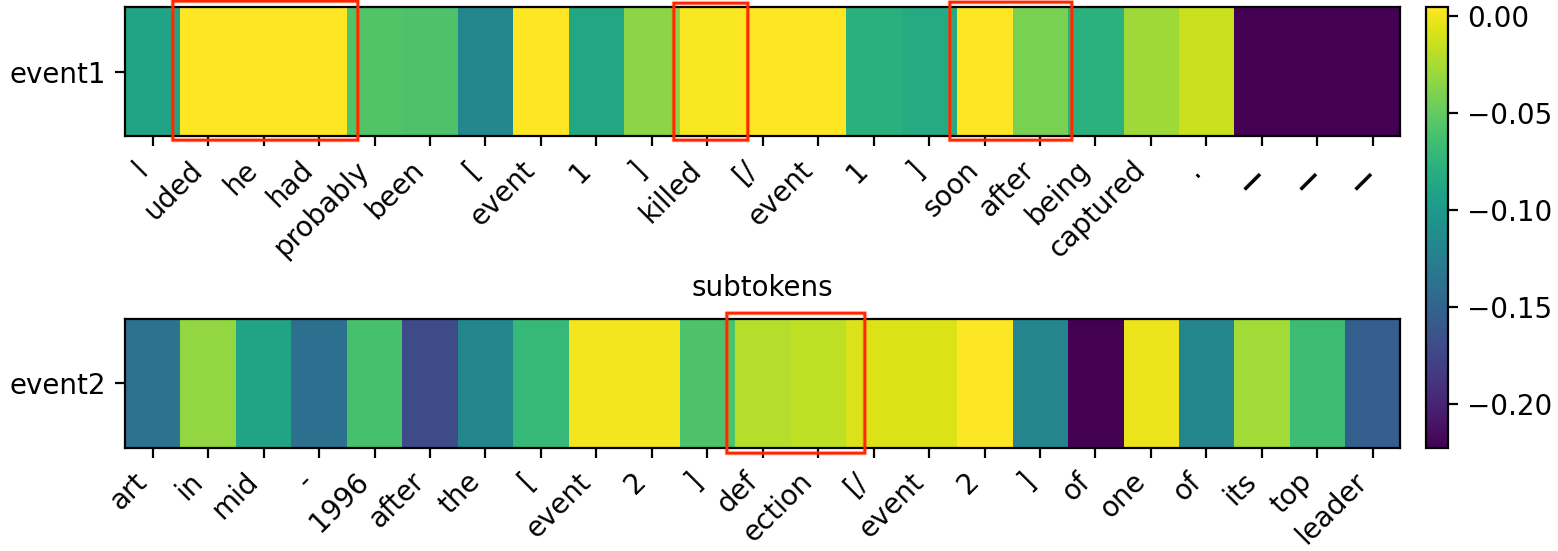}
\label{fig:adapter_original_1}
}
\hfil
\caption{Visualization of event interaction maps for evaluating the temporal relation extraction task: (a) \textbf{Full Fine-tuning}, (b) Text Concatenation with \textbf{Adapter} fine-tuning, (c) \textbf{LoRA} fine-tuning,  \textbf{Embedding Fusion} with (d) \textbf{Adapter} fine-tuning, and (e) \textbf{LoRA} fine-tuning,  \textbf{CBI} with (f) \textbf{Adapter} fine-tuning, and (j) \textbf{LoRA} fine-tuning.
\legendoutline{red} marks tokens emphasized by different methods.
\textcolor{black}{CBI offers a computationally efficient alternative to full fine-tuning while effectively capturing distributed temporal cues, outperforming other parameter-efficient methods in preserving critical signals.}
}
\label{fig:adapter_original_all}
\end{figure*}

\textcolor{black}{We perform a unified error analysis across both LoRA and adapter settings, each consisting of 500 samples, to examine whether model behavior is consistent across parameter-efficient tuning methods. Our analysis focuses on the types and structure of errors based on manual analysis.} 

\textcolor{black}{In both settings, two baselines (\textbf{Text Concatenation} and \textbf{Embedding Fusion}) show a consistent failure pattern: they often reduce containment relations to a simple order relation. In the LoRA setting, for gold \textit{IS\_INCLUDED}, Embedding Fusion and Text Concatenation misclassify \textbf{53.7\%} and \textbf{43.5\%} of cases as \textit{BEFORE}/\textit{AFTER}, respectively, while CBI reduces this to \textbf{28.7\%}. Similar patterns are observed in the adapter setting. This shows that the baselines struggle to represent nested temporal structure, while CBI better preserves it.}

\textcolor{black}{Another common error is confusion between \textit{INCLUDES} and \textit{IS\_INCLUDED}. In the LoRA setting, direction confusion accounts for \textbf{24.1\%} and \textbf{27.8\%} of containment errors for Embedding Fusion and Text Concatenation, respectively, compared to only \textbf{9.3\%} for CBI. In the adapter setting, Text Concatenation further shows a strong bias toward predicting \textit{INCLUDES} across diverse cases. These results indicate that the baselines often ignore directionality, while CBI captures it more reliably.}

\textcolor{black}{However, CBI introduces a different error pattern. For gold order relations (\textit{BEFORE}/\textit{AFTER}), CBI predicts containment in \textbf{17.2\%} of cases, compared to \textbf{13.1\%} for Embedding Fusion. Similar trends appear in the adapter setting. This suggests that stronger interaction modeling may sometimes lead to interpreting semantic relatedness as temporal containment.}

\textcolor{black}{Finally, all models struggle with the simultaneity relation. In the LoRA setting, CBI correctly predicts \textbf{20.0\%} (3/15) of SIMULTANEOUS cases, while the baselines achieve 0\%. }

\textcolor{black}{Overall, the error patterns are consistent across LoRA and adapter. The baselines tend to simplify structure (reducing containment relations to order relations or ignoring their direction).}

\subsection{Case Study}
\textcolor{black}{To understand why we need event interaction in a PEFT setting, we add one more example from the TDD-Man dataset~\citep{naik2019tddiscourse}:
Sentence 1: \textit{... luded he had probably been [event1]killed[/event1] soon after being captured.} Sentence 2: \textit{art in mid-1996 after the [event2]defection[/event2] of one of its top leader}.
The gold relation is \textit{BEFORE} and this is supported by the following cues. ``Been killed'' indicates a completed past event, suggesting temporal precedence. ``Soon after'' and ``after'' provide explicit ordering signals within the narrative. ``Defection'', anchored by  ``in mid-1996'', serves as a later temporal reference point. Together, these cues indicate that ``killed'' happened before ``defection''.}

\textcolor{black}{Model analysis shows different sensitivities to these cues. Full fine-tuning captures been ``killed'', ``after'', and ``defection''. Text Concatenation fails in both adapter and LoRA settings. Embedding Fusion captures ``defection'' with adapters, and ``been killed'', ``after'', and ``defection'' with LoRA. CBI captures ``soon after'' and ``after'' with adapters, and ``been killed'', ``soon after'', and ``defection'' with LoRA, suggesting better preservation of the distributed temporal signals.
While full fine-tuning is effective at capturing these critical cues, it comes with significantly higher computational costs and longer training times. 
In comparison, CBI provides a lightweight alternative while still being able to capture the temporal cues.}

\subsection{Wider Context of Examples}
\label{sec:context}
\textcolor{black}{The full context for the example shown in Figure~\ref{fig:adapter_original} and Figure~\ref{fig:heatmap_all} is: \textit{But while the two Slavic neighbors see themselves as natural partners, their relations since the breakup of the Soviet Union have been bedeviled by a number of disputes \_ Black Sea naval bases, border problems and Ukraine's natural gas debts. ``We have covered the entire list of questions and discussed how we will be tackling them,'' Yeltsin was quoted as saying by the ITAR-Tass news agency. ``I must say there are no unsettled problems any more. We have solved them all.'' But his sweeping statement contained no details, and gave no indication of how the [event1]disputes[/event1] could be resolved. Their solution would require a compromise between the two nations' parliaments. A major dispute concerns a broad political treaty calling for border demarcation, which the two presidents signed last May In effect, the treaty amounts to Russian recognition of Ukraine's sovereignty and borders, and the Ukrainian parliament has already ratified it. However, Russia has stalled ratification, [event2]trying[/event2] totiet it to an agreement that would permit the Russian navy to use a naval base on Ukraine's Crimean peninsula for at least 20 more years.}}

\textcolor{black}{The full context for the example shown in Figure~\ref{fig:adapter_original_all} is : \textit{January this year, British police officers who had been searching for Howes concluded he had probably been [event1]killed[/event1] soon after being captured. The Foreign Office said it had informed the family of Howes, 37 years old when he was kidnapped, that he probably died within weeks or months of his capture on March 26, 1996. Obviously, it is deeply discouraging for the family after 22 months, but there is no proof of life. But there is no evidence in either direction \_ that there is proof of life or death,'' said a Foreign Office spokesman, speaking with customary anonymity. We will continue to do everything we can to establish what has happened.'' Thai military officials who monitor Cambodian affairs said privately Thursday that Britain, through its embassies in Thailand and Cambodia, has been pushing hard to resolve the Howes case as the second anniversary of his abduction nears. Nhek Bunchhay, who had been closely involved in the search for Howes before having to flee the Cambodian capital after a coup d'etat last year, appeared confident he would find Howes' remains. He said he received information from Khmer Rouge guerrillas on where the body had been buried, and recently ordered 10 of his men from a force of 500 near Khmer Rouge headquarters in Anlong Veng to conduct the search. If and when the remains are found, he said, they would be turned over to the British Embassy, apparently meaning they would be sent across the border into Thailand and onward to Bangkok. Nhek Bunchhay said he now believed Howes had been killed within a week of his capture by a Khmer Rouge faction loyal to Pol Pot, then the guerrilla group's leader. Pol Pot is considered responsible for the radical policies that led to the deaths of as many as 1.7 million Cambodians when the communist group held power in the late 1970s. At the time Howes was captured, the Khmer Rouge were a more or less united guerrilla force with more than 10,000 men under arms. But the group began to fall apart in mid-1996 after the [event2]defection[/event2] of one of its top leaders, Ieng Sary.}}

\end{document}